\newtheorem{definition}{Definition}
\newtheorem{theorem}{Theorem}
\begin{document}

\title{State-Aware Perturbation Optimization for Robust Deep Reinforcement Learning}


\author{
    Zongyuan Zhang,
    Tianyang Duan,
    Zheng Lin,
    Dong Huang, 
    Zihan Fang,
    Zekai Sun, 
    Ling Xiong,~\IEEEmembership{Member,~IEEE,}
    Hongbin Liang,~\IEEEmembership{Member,~IEEE,}
    Heming Cui,~\IEEEmembership{Member,~IEEE,} 
    and Yong Cui,~\IEEEmembership{Member,~IEEE} 
    \thanks{Z. Zhang, T. Duan, D. Huang, Z. Sun, and H. Cui are with the Department of Computer Science, The University of Hong Kong, Hong Kong SAR, China (e-mail: zyzhang2@cs.hku.hk; tyduan@cs.hku.hk; dhuang@cs.hku.hk; zksun@cs.hku.hk; heming@cs.hku.hk).}
    \thanks{Z. Lin is with the Department of Electrical and Electronic Engineering, The University of Hong Kong, Hong Kong SAR, China (e-mail: linzheng@eee.hku.hk).}
    \thanks{Z. Fang is with the Department of Computer Science, City University of Hong Kong, Hong Kong SAR, China (e-mail: zihanfang3-c@my.cityu.edu.hk).}
    \thanks{L. Xiong is with the School of Computer and Software Engineering, Xihua University, Chengdu 610039, China. (e-mail: lingdonghua99@163.com)}
    \thanks{H. Liang is with the National United Engineering Laboratory of Integrated and Intelligent Transportation, and the National Engineering Laboratory of Integrated Transportation Big Data Application Technology, Southwest Jiaotong University, Chengdu 611756, China (e-mail: hbliang@swjtu.edu.cn).}
    \thanks{Y. Cui is with the Department of Computer Science and Technology, Tsinghua University, Beijing 100084, China (e-mail: cuiyong@tsinghua.edu.cn).}
    \thanks{(Corresponding author: Tianyang Duan; Zheng Lin)}
}



\maketitle

\begin{abstract}
Recently, deep reinforcement learning (DRL) has emerged as a promising approach for robotic control. However, the deployment of DRL in real-world robots is hindered by its sensitivity to environmental perturbations. While existing white-box adversarial attacks rely on local gradient information and apply uniform perturbations across all states to evaluate DRL robustness, they fail to account for temporal dynamics and state-specific vulnerabilities. To combat the above challenge, we first conduct a theoretical analysis of white-box attacks in DRL by establishing 
the adversarial victim-dynamics Markov decision process (AVD-MDP), to derive the necessary and sufficient conditions for a successful attack. Based on this, we propose a selective state-aware reinforcement adversarial attack method, named STAR, to optimize perturbation stealthiness and state visitation dispersion. STAR first employs a soft mask-based state-targeting mechanism to minimize redundant perturbations, enhancing stealthiness and attack effectiveness. Then, it incorporates an information-theoretic optimization objective to maximize mutual information between perturbations, environmental states, and victim actions, ensuring a dispersed state-visitation distribution that steers the victim agent into vulnerable states for maximum return reduction. Extensive experiments demonstrate that STAR outperforms state-of-the-art benchmarks.

\end{abstract}

\begin{IEEEkeywords}
Markov Decision Process, Deep Reinforcement Learning, Adversarial Attack, Robotic Manipulation
\end{IEEEkeywords}

\section{Introduction}
Robotic systems have become increasingly prevalent in mobile and distributed applications, ranging from autonomous navigation \cite{wei2024autonomous,lin2022channel,hu2024toward, 10135137,lin2024adaptsfl,hu2024agentscodriver,song2023emma}, intelligent transportation \cite{10323097,lin2022tracking, khalil2024advanced,lin2023pushing,fang2024ic3m,lin2022v2i}, and industrial manufacturing \cite{han2023survey, 10100908}. While traditional robotic control methods have achieved considerable success in structured environments with predefined tasks, they struggle to adapt to dynamic scenarios, handle uncertainties, or learn from experience.  Deep reinforcement learning (DRL) has emerged as a promising alternative for robotic control \cite{lin2021softgym,duan2025rethinking}. Unlike traditional approaches that rely on manually designed rules, DRL enables agents to acquire optimal behaviors through trial-and-error interactions with their environment. By learning a policy that maps environmental states to actions while maximizing long-term rewards, DRL is particularly effective for complex robotic tasks involving delayed feedback and temporal dependencies.

\begin{figure}
    \centering
    \includegraphics[width=0.83\linewidth]{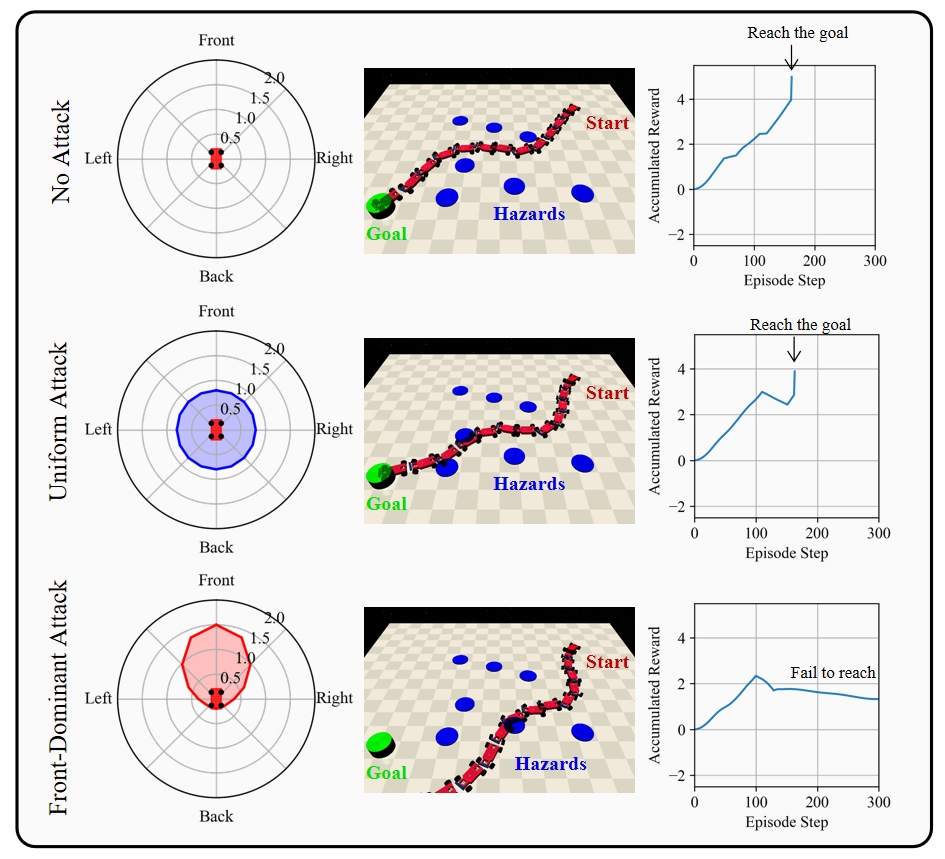}
    \caption{Performance comparison of three attack strategies in a four-wheeled robot navigation task: No Attack (top), Uniform Attack (middle), and Front-Dominant Attack (bottom). Each row presents the attack intensity distribution (left), navigation trajectory (middle), and accumulated reward (right). The Front-Dominant Attack exhibits the highest efficacy in disrupting navigation by concentrating perturbations in the frontal direction.}
    \label{fig:intro}
\end{figure}

The robustness of DRL policies is crucial for their successful deployment in real-world robotic applications, as DRL agents can be highly sensitive to environmental perturbations\cite{shi2024distributionally,yuan2024itpatch}. Minor variations in the input state - whether arising from sensor noise, environmental changes, or adversarial attacks - can significantly impact an agent's decision-making process and potentially trigger catastrophic failures. White-box adversarial attacks serve as an effective tool for evaluating RL robustness by exposing potential vulnerabilities in the learned policies. With full access to the model architecture and parameters, white-box attacks can systematically analyze policy networks to generate perturbations\cite{schott2024robust}. Through adversarial training with perturbed samples generated by such attacks, DRL agents can learn to maintain robust performance under state perturbations, enhancing reliability in real-world deployments.



However, existing white-box attack methods face significant challenges in targeting deep reinforcement learning (DRL) agents, as they primarily rely on local gradient information to generate perturbations \cite{duan2021advdrop, chen2024diffusion, chen2024content}. Adapted from supervised learning attack paradigms, these methods assume temporal independence and focus on instantaneous state-action mappings, neglecting the temporal dynamics inherent in Markov Decision Processes (MDPs). As a result, they fail to generate perturbations that can effectively disrupt the agent's cumulative rewards over extended time horizons. 
Furthermore, these methods apply perturbations indiscriminately across all states without identifying key features that critically impact performance. This is particularly problematic in high-dimensional state spaces, where only a subset of variables—such as specific joint angles in robotic manipulation—are crucial for policy execution. Though some works attempt to weight perturbations based on policy network gradient magnitudes \cite{oikarinen2021robust, hickling2023robust, haydari2021adversarial}, aiming to maximize behavioral deviation from the original policy. Nevertheless, in DRL, this does not directly align with the core attack objective of minimizing cumulative rewards, as agents can adapt by selecting alternative actions to maintain comparable long-term performance.

To investigate the limitations of state-agnostic attacks, we conduct experiments using a four-wheeled robot navigation task in the Safety Gymnasium environment \cite{ji2023safety}. As shown in Figure \ref{fig:intro}, the robot's objective is to navigate from the start position to the goal while avoiding hazards. We evaluate three attack strategies: (i) \textit{No Attack}, where the robot operates without interference; (ii) \textit{Uniform Attack}, where perturbations are applied uniformly across all state dimensions; and (iii) \textit{Front-Dominant Attack}, where perturbations are concentrated in the robot's frontal direction. For each strategy, we analyzed the perturbation distribution across state dimensions, the resulting trajectory, and the accumulated reward over time. Under \textit{No Attack}, the robot follows an optimal trajectory, successfully avoiding hazards and reaching the goal, with the accumulated reward increasing steadily. In contrast, \textit{Uniform Attack} introduces perturbations across all state dimensions, causing minor trajectory deviations. While the robot occasionally approaches hazards, it still reaches the goal, with a slight dip in the accumulated reward curve before near-optimal performance is regained. Finally, \textit{Front-Dominant Attack} applies perturbations predominantly in the frontal direction, severely disrupting the robot’s navigation. This results in significant trajectory deviation, with the robot encountering hazards and failing to reach the goal, leading to a substantial reduction in the accumulated reward. These experiments highlight the importance of attack selectivity. Using the same perturbation budget as \textit{Uniform Attack}, \textit{Front-Dominant Attack} proves more effective by targeting specific state dimensions, exposing the vulnerability of DRL agents to state-specific perturbations.

To address the above issue, the paper aims to address a fundamental research question: \emph{How can we develop a state-aware adversarial attack framework that identifies and exploits vulnerable states in DRL policies while accounting for long-term reward impact?} To this end, we first propose the Adversarial Victim-Dynamics Markov Decision Process (AVD-MDP) to formalize white-box adversarial attacks in DRL, enabling a systematic analysis of adversarial-victim interactions under perturbation constraints. Based on this, we derive necessary and sufficient conditions for successful attacks and identify two key properties essential for effective adversarial strategies: stealthiness of perturbations and dispersion in state visitation. Guided by these insights, we then propose \textbf{S}elective S\textbf{T}ate-\textbf{A}ware \textbf{R}einforcement adversarial attack (STAR), a white-box attack algorithm that jointly optimizes these properties. STAR employs a soft mask-based state-targeting mechanism to minimize perturbations on redundant state dimensions, enhancing stealthiness while maintaining attack efficacy. Additionally, it incorporates an information-theoretic objective to maximize mutual information between adversarial perturbations, environmental states, and victim actions, promoting a dispersed state-visitation pattern. By strategically inducing the victim agent to visit vulnerable states, STAR maximizes return reduction under a fixed perturbation budget. key contribution of the paper can be summarized as follows:
\begin{itemize}
\item We propose a novel adversarial attack framework for DRL that systematically accounts for temporal dependencies and state-specific vulnerabilities, addressing key limitations in existing white-box attack methods.
\item  We theoretically analyze the interaction between adversarial and victim policies by establishing the adversarial victim-dynamics Markov decision process (AVD-MDP), to derive the necessary and sufficient conditions for a successful attack.
\item Based on the derived interaction, we design STAR, a principled white-box attack algorithm that optimizes both stealthiness and distribution via a soft mask-based state-targeting mechanism and an information-theoretic optimization objective to maximize attack effectiveness under a certain perturbation budget.
\item We empirically evaluate STAR with extensive experiments. The results demonstrate that STAR outperforms state-of-the-art frameworks in evaluating the robustness DRL.
\end{itemize}

The rest of the paper is organized as follows. Section II discusses related work and technical limitations. Section III elaborates
on the system model and Section IV presents theoretical analysis of the AVD-MDP process. Section V presents the system design of STAR. Section VI details the experimental setup, followed by performance evaluation in Section VII. Finally, conclusions are presented in Section VIII.

\section{Related Work}

Adversarial attacks expose the vulnerabilities of deep neural networks (DNNs) by introducing carefully crafted perturbations into input data, leading to incorrect predictions during inference~\cite{ZW_TCOM_2024,wang2024ultralola}. These perturbations, though imperceptible to humans, can significantly alter DNN outputs \cite{akhtar2021advances}. White-box attacks constitute a critical category of adversarial attacks, where the adversary has full access to the model, including its architecture, parameters, and gradients. Fast Gradient Sign Method (FGSM) \cite{goodfellow2014explaining} is a seminal white-box attack that efficiently generates adversarial perturbations by leveraging the gradient of the loss function with respect to the input, addressing the computational inefficiencies of earlier approaches. Projected Gradient Descent (PGD) \cite{madry2017towards} enhances attack effectiveness through iterative gradient-based updates, projecting perturbed inputs back into the constrained space after each step. Wong et al. \cite{wong2020fast} improved attack efficiency by introducing random initialization points in FGSM-based attacks. Schwinn et al. \cite{schwinn2023exploring} increased attack diversity by injecting noise into the output while mitigating gradient obfuscation caused by low-confidence predictions. Beyond standard white-box attacks, various techniques have been proposed to improve adversarial transferability \cite{wang2021admix, dong2018boosting}. These include random input transformations \cite{xie2019improving}, translation-invariant perturbation aggregation \cite{dong2019evading}, and substituting momentum-based gradient updates with Nesterov accelerated gradients \cite{lin2019nesterov}.

Adversarial attacks in deep reinforcement learning (DRL) have been widely studied, revealing critical vulnerabilities in agent policies \cite{ilahi2021challenges, duan2025rethinking}. Huang et al. \cite{huang2017adversarial} demonstrated that policy-based reinforcement learning (RL) agents are highly susceptible to adversarial perturbations on state observations, showing that FGSM attacks can significantly degrade performance in Atari 2600 games. Pattanaik et al. \cite{pattanaik2017robust} introduced adversarial examples by computing gradients of the critic network with respect to states and integrated them into the training of Deep Double Q-Network (DDQN) and Deep Deterministic Policy Gradient (DDPG), enhancing robustness. Lin et al. \cite{lin2017tactics} proposed strategically timed attacks that selectively perturb key decision-making states, achieving high attack success rates with minimal perturbations. Recent work has shifted towards theoretical modeling of adversarial attacks within the Markov Decision Process (MDP) framework. Weng et al. \cite{weng2019toward} introduced a systematic evaluation framework for RL robustness in continuous control, defining two primary threat models: observation manipulations and action manipulations. Zhang et al. \cite{zhang2020robust} proposed SA-MDP, which provides a theoretical foundation for modeling state adversarial attacks within MDPs. Oikarinen et al. \cite{oikarinen2021robust} developed RADIAL-RL, a general framework for training RL agents to enhance resilience against adversarial attacks, and introduced Greedy Worst-Case Reward as a new evaluation metric for agent robustness.

\section{System Model}
\subsection{Deep Reinforcement Learning}\label{sec:drl}
DRL problems are formalized as Markov Decision Processes (MDPs), which provide a theoretical framework for modeling sequential decision-making in unknown environments \cite{sutton2018reinforcement}. A MDP is defined as a tuple $\left \langle \mathcal{S},\mathcal{A},R,\mathcal{P},\gamma \right \rangle $, where $\mathcal{S}$ and $\mathcal{A}$ denote the state and action spaces, respectively. The reward function $R: \mathcal{S} \times \mathcal{A} \to \mathcal{R} $ assigns a scalar reward to each state-action pair, while the transition function $\mathcal{P}: \mathcal{S} \times \mathcal{A} \times \mathcal{S} \to \left [ 0,1 \right ] $ defines the probability of transitioning between states given an action. The discount factor $\gamma \in \left [ 0,1 \right )$ governs the weight of future rewards. A stochastic policy $\pi : \mathcal{S} \times \mathcal{A} \to \left [ 0,1 \right ]$ defines a probability distribution over actions given a state, where $a\sim \pi \left ( \cdot \mid s \right ) $ denotes sampling an action $a$ in state $s$. At each time step $t$, the agent selects an action $a_t\sim \pi \left ( \cdot \mid s_t \right )$ based on the current state $s_t$. The environment returns a reward $R\left ( s_t,a_t \right )$ and transitions to the next state $s_{t+1}\sim \mathcal{P} \left ( \cdot \mid s_t,a_t \right )$. The agent's objective is to learn a policy $\pi$ that maximizes the expected discounted return:
\begin{equation}
\label{eq:1}
\begin{aligned}
J\left ( \pi  \right ) = {\textstyle \sum_{t=0}^{\infty }\gamma ^t\mathbb{E}_{a\sim \pi,s\sim \mathcal{P} } \left [ R\left ( s_t,a_t \right )  \right ] }.
\end{aligned}
\end{equation}



To find the optimal policy, value-based reinforcement learning methods are widely used to derive optimal policies. These methods are advantageous due to their ability to efficiently estimate long-term rewards through iterative value estimation, enabling effective decision-making in complex environments. The state value function is defined as $V^{\pi } \left ( s_t \right ) = \mathbb{E}_{a\sim \pi,s\sim \mathcal{P} } \left [ {\textstyle \sum_{k=0}^{\infty } \gamma ^k R\left ( s_{t+k},a_{t+k} \right )}  \right ]  $ which represents the expected discounted return starting from state \( s_t \) under policy \( \pi \). Similarly, the action-value function (Q-function) that quantifies the expected discounted return for selecting action \( a_t \) in state \( s_t \) under policy \( \pi \) is given by  $Q^{\pi } \left ( s_t,a_t \right ) = R\left ( s_t,a_t \right )+\mathbb{E}_{s_{t+1}\sim \mathcal{P} } \left [ V^{\pi } \left ( s_{t+1} \right )   \right ]  $ 
Policy updates rely on the value function or Q-function. For instance, in Deep Q-Networks (DQN) \cite{mnih2013playing}, the policy follows a greedy selection, i.e., $\pi = {\mathrm{arg} \max}_{a} Q^\pi\left ( s,a \right )$, and the Q-function is updated as
\begin{equation}
\label{eq:1}
\footnotesize
\begin{aligned}
Q^\pi _\phi \!\left ( \!s_t,a_t \right )\!\!\gets \! Q^\pi _\phi\left ( s_t,a_t \right ) \!+\! l\left [ R\left ( s_t,a_t \right ) \!+\gamma Q^\pi _\phi\left ( s_{t+1},a_{t+1} \!\right )\!-\! Q^\pi _\phi\left ( s_t,a_t \right )\right]\!\!, \\
\end{aligned}
\end{equation}
where \( l \) is the learning rate, and \( Q^\pi _\phi \) is the Q-function parameterized by a deep neural network with parameters \( \phi \). 
In the remainder of the paper, we assume all Q-functions and policies are parameterized. For brevity, we denote \( Q^\pi _\phi \) as \( Q^\pi \) and the policy \( \pi _\varphi \) as \( \pi \), where \( \varphi \) represents the policy network parameters.


A fundamental aspect of policy optimization is the theoretical bound on the difference in expected returns between two policies, which provides critical insights into policy improvement guarantees and stability. Let $A^\pi \left ( s,a \right ) = Q^\pi \left ( s,a \right )-V^\pi \left ( s \right )$ denote the advantage of selecting action $a$ in state $s$ under policy $\pi$. The discounted future state distribution of all possible trajectories starting from state $s$ under $\pi$ is represented as $d^\pi \left ( s \right ) =\left ( 1-\gamma  \right )  {\textstyle \sum_{k=0}^{\infty }\gamma ^k P\left ( s_k=s\mid \pi  \right ) } $. 

Constrained Policy Optimization (CPO) \cite{achiam2017constrained} establishes the relationship between the performance difference of two policies and their state-action visitation dynamics under a reward function. Specifically, CPO provides the following bounds for the performance difference:
\begin{equation}
\label{eq:CPO}
 D^-_{\pi ,f}\left ( \pi ' \right ) \le J\left ( \pi' \right ) \!-\!J\left ( \pi  \right )  \le D^+_{\pi ,f}\left ( \pi ' \right ),
\end{equation}
where $D^\pm _{\pi ,f}\left ( \pi ' \right ) =\frac{L_{\pi ,f}\left ( \pi ' \right ) }{1-\gamma } \pm \frac{2\gamma \xi^{\pi '} _f}{\left ( 1-\gamma  \right )^2 }\mathbb{E}_{s\sim d^\pi}\left [ \mathrm{D}_\mathrm{TV}\left ( \pi \parallel   \pi '    \right ) \left [ s \right ] \right ]$. For any function $f: \mathcal{S} \to \mathcal{R} $ and policies $\pi$, $\pi '$, $L_{\pi ,f}\left ( \pi ' \right )$, and $\xi^{\pi '} _f$ are given by the following definitions:
\begin{align}
& L_{\pi ,f}\left ( \pi ' \right ) =\mathbb{E}_{s\sim d^\pi, a\sim\pi,s'\sim \mathcal{P} }\left [ \left ( \frac{\pi '\left ( a\mid s \right ) }{\pi\left ( a\mid s \right )}-1  \right ) \delta _f\left ( s,a,s' \right ) \right ] , \\
& \xi^{\pi '} _f=\max_s \left | \mathbb{E}_{a\sim\pi ',s'\sim \mathcal{P} } \left [ \delta _f\left ( s,a,s' \right )  \right ]  \right |  ,
\end{align}
where $\delta _f\left ( s,a,s' \right )=R\left ( s,a,s' \right )+\gamma f\left ( s' \right ) -f\left ( s\right )$.

\subsection{White-box Adversarial Attack}

White-box adversarial attacks are common methods for evaluating the vulnerability of deep neural networks (DNNs) by leveraging gradient-based perturbations to manipulate input samples~\cite{chakraborty2018adversarial}. To be Specific, given an input sample $x$ with corresponding ground truth label $y$,  a DNN is defined as a function $f: \mathcal{X} \to \mathcal{Y} $, where $\mathcal{X}$ and $\mathcal{Y}$ denote the input and output spaces, respectively. The objective of an adversarial attack is to introduce a minimal perturbation $\eta$ to $x$, such that the model produces an incorrect prediction for the perturbed input $x^*$. This can be formulated as a constrained optimization problem:
\begin{equation}
\label{eq:2}
{\mathrm{arg} \max}_{x^*} J\left ( x^*,y \right ) ,\quad \mathrm{s.t.} \ \left \| \eta \right \| _p\le \epsilon,
\end{equation}
where $x^*=x+ \eta $ represents the adversarial example, $\eta$ denotes the adversarial perturbation, $J\left ( \cdot ,\cdot  \right ) $ is the loss function, typically mean squared error or cross-entropy. The constraint $\ \left \| \eta \right \| _p\le \epsilon$ ensures that the perturbation magnitude remains within a predefined threshold $\epsilon\in \left ( 0,\infty  \right ) $, where $\ \left \| \cdot  \right \| _p$ denotes the $L_p$ norm. The choice of $\ \left \| \cdot  \right \| _p$ depends on the application: the $L_\infty$ norm is commonly used for image-based tasks (e.g., Arcade Learning Environment \cite{bellemare2013arcade}), whereas the $L_2$ norm is often preferred for physical-state-based environments (e.g., Safety Gymnasium environment \cite{ji2023safety}).

To solve the constrained optimization problem in Eq.~\ref{eq:2}, FGSM\cite{goodfellow2014explaining} approximates the loss function linearly and generates adversarial examples using a single-step update:
\begin{equation}
\label{eq:3}
x^*=x + \epsilon \cdot \mathrm{sign}\left ( \nabla _x J\left ( x,y \right )  \right ),
\end{equation}
where $\mathrm{sign}\left ( \cdot  \right ) $ represents the sign function. The perturbation in Eq.~\ref{eq:3} satisfies the $L_\infty$ norm constraint. For the $L_2$ norm-setting, $\epsilon$ is adjusted as $\epsilon / \left \| \nabla _x J\left ( x,y \right ) \right \| _2$ to ensure the perturbation magnitude remains within the constraint. The PGD~\cite{madry2017towards} attack extends FGSM by applying iterative gradient updates with a step size $\delta $ while projecting the perturbed input back into the feasible $\epsilon$-ball:
\begin{equation}
\label{eq:4}
x^*_{n+1}=x^*_n + \delta  \cdot \mathrm{sign}\left ( \nabla _x J\left ( x^*_n,y \right )  \right ), 
\end{equation}
where $n=0,1,\dots ,N$, and $N$ denotes the number of iterations, The initial perturbation is set as $x^*_0=x+\varepsilon \cdot \mathcal{N} \left ( \mathbf{0},\mathbf{I}   \right )$, where $\varepsilon$ is a scaling factors, and $ \mathcal{N} \left ( \mathbf{0},\mathbf{I}   \right )$ represents a multivariate standard normal distribution. To enforce the constraint in Eq.~\ref{eq:2}, the step size is typically set to $\delta = \epsilon / N$.

\section{Theoretical Analysis of Attack Success Conditions}\label{sec:ta}

\subsection{Adversarial Victim-Dynamics Markov Decision Process}
We formalize the execution of DRL agents under adversarial attacks by establishing an Adversarial Victim-Dynamics Markov Decision Process (AVD-MDP). The AVD-MDP is defined as a tuple: 
\begin{equation}
\left \langle \mathcal{S},\mathcal{A},\mathcal{B},\mu,R,\mathcal{P},\gamma \right \rangle,
\end{equation}
where $\mathcal{B}$ denotes the space of allowable adversarial perturbations, and $\mu \left ( \cdot \mid s \right ) $ represents the policy of the victim agent (i.e., the agent under attack). The remaining elements follow the standard MDP formulation introduced in Section~\ref{sec:drl}. Let $\nu \left ( \cdot \mid s,\mu  \right ) $ denote the adversarial policy (i.e., adversarial agent's policy), representing the attacker's strategy under a white-box setting, where the attacker has full access to the victim's policy $\mu$ and generates adversarial perturbations based on both the current state $s$ and $\mu$. Notably, $\nu \left ( \cdot \mid s,\mu  \right ) $ is not necessarily a policy in the conventional DRL sense but can be derived by solving the constrained optimization problem in Eq.~\ref{eq:2}, using methods such as FGSM or PGD. At each time step $t$, the adversarial policy samples a perturbation $\eta _t\sim \nu \left ( \cdot \mid s_t,\mu  \right ) $  which is then applied to the victim agent’s observation. The victim agent selects an action according to its policy
\begin{equation}
a _t \sim \mu  \left ( \cdot \mid s_t+ \eta_t \right ),
\end{equation}
where $s_t+ \eta_t$ is the perturbed state.The single attack process is independent of the environment dynamics, forming a one-step sequential decision process. To formalize adversarial attacks in AVD-MDP, we introduce the following definition:
\begin{definition} {\bf {(Single-Step Adversarial Attack)}}
Let $\mu \left ( \cdot \mid s \right ) $ denote the victim agent’s policy and $\nu \left ( \cdot \mid s , \mu  \right )$ represents the adversarial policy. We define a single-step attack $\mu \oplus \nu  \left ( \cdot \mid s \right )$ from the adversarial agent to the victim agent under state $s$ as:
\begin{equation}
\mu \oplus \nu  \left ( \cdot \mid s \right ) = \mu \left ( \cdot \mid s +\eta  \right ),  \eta \sim \nu \left ( \cdot \mid s , \mu  \right ).  
\end{equation}
\end{definition}

In AVD-MDP, adversarial attacks occur after the victim agent has fully converged in the environment, achieving optimal or near-optimal returns in attack-free rollouts. The goal of adversarial agents is to degrade the victim agent’s performance by minimizing its expected return:
\begin{equation}
\label{eq:6}
J\left ( \nu \right ) = \min_{\nu} { \sum_{t=0}^{\infty }\gamma ^t\mathbb{E}_{a\sim \mu \oplus \nu ,s\sim \mathcal{P} } \left [ R\left ( s_t,a_t \right ) \right ] },
\end{equation}
where $R\left ( s_t,a_t \right )$ represents the reward function and $\gamma$ is the discount factor. The adversarial attack modifies the victim agent's state-action trajectory, leading to a distinction between attack-free and attacked rollouts: 
\begin{equation*}
\begin{aligned}
&\phantom{\overset{\nu  }{\rightarrow} \eta _0} \underbrace{s_0\overset{\mu }{\rightarrow} a_0\overset{\mathcal{P}  }{\rightarrow}s_1\overset{\mu }{\rightarrow} \dots \overset{\mathcal{P} }{\rightarrow}s_t \overset{\mu }{\rightarrow}a_t\overset{\mathcal{P}  }{\rightarrow}\dots}_{\text{attack-free rollout}} \\
&\underbrace{s_0\overset{\nu  }{\rightarrow} \eta _0 \overset{\mu }{\rightarrow} a_0\overset{\mathcal{P}  }{\rightarrow}s_1\overset{\mu }{\rightarrow}\dots \overset{\mathcal{P} }{\rightarrow}s_t \overset{\nu  }{\rightarrow} \eta _t \overset{\mu }{\rightarrow}a_t\overset{\mathcal{P}  }{\rightarrow}\dots}_{\text{attacked rollout}}
\end{aligned}
\end{equation*}

Since the victim agent follows a fixed policy 
$\mu$ throughout the attacked rollout, it can be regarded as part of the environment’s dynamics from the adversarial agent’s perspective. However, unlike typical environmental transitions, this component is fully observable and known to the adversarial agent. We define the adversarial agent’s value function and Q-function as the victim agent’s expected return under the attacked rollout distribution:
\begin{definition} {\bf{(Value and Q-functions of the Adversarial Agent)}}
In AVD-MDP, the adversarial agent's value function $V^{\mu \oplus \nu }$ and Q-function $Q^{\mu \oplus \nu  }$ with victim agent's policy $\mu$ and adversarial policy $\nu$ are defined as:
\begin{align}
& V^{\mu \oplus \nu } \left ( s_t \right ) = \mathbb{E}_{a\sim \mu \oplus \nu,s\sim \mathcal{P} } \left [ {\textstyle \sum_{k=0}^{\infty } \gamma ^k R\left ( s_{t+k},a_{t+k} \right )}  \right ], \label{eq:7} \\
& Q^{\mu \oplus \nu  } \left ( s_t,a_t \right ) = R\left ( s_t,a_t \right )+\mathbb{E}_{s_{t+1}\sim \mathcal{P} } \left [ V^{\mu \oplus \nu  } \left ( s_{t+1} \right )   \right ], \label{eq:8}
\end{align}
where $s_t\in \mathcal{S}$, $a_t\in \mathcal{A}$, $\eta _t\in \mathcal{B}$ are state, action, and adversarial perturbation, respectively.
\end{definition}

AVD-MDP reformulates the adversarial attack problem as a single-agent MDP, allowing the adversary to systematically optimize perturbations over long horizons. Different from existing works\cite{weng2019toward,zhang2020robust}, which primarily focus on perturbing individual actions or states, AVD-MDP incorporates the victim agent's policy into the environmental dynamics and directly optimizes for minimizing the victim agent’s expected return. This modeling approach aligns closely with real-world safety testing scenarios, as the expected return directly reflects the victim agent's performance in the targeted task.

\subsection{Analysis of Attack Success Conditions}

To develop effective adversarial policies, we first analyze the theoretical conditions ensuring a successful attack. For an arbitrary state $s$ and victim agent's policy $\mu \left ( \cdot \mid s \right ) $,  a successful attack reduces its expected return:
\begin{equation}
 V^\mu \left ( s \right ) - V^{\mu\oplus \nu } \left ( s \right ) \ge 0, \  \text{for} \ \forall s\in \mathcal{S} \ \text{and} \ \mu \left ( \cdot \mid s \right ),    
\end{equation}
We derive the following theorem to provide a necessary condition for a successful attack.
\begin{theorem} {\bf{(Necessary Condition for Attack Success)}} \label{the:1}
Given a victim agent's policy $\mu \left ( \cdot \mid s \right ) $ and an adversarial policy $\nu  \left ( \cdot \mid s,\mu  \right ) $, a necessary condition for a successful attack is given by:
\begin{equation}
\delta \left ( \mu ,\nu \right ) \left [ s \right ] \le \frac{2\gamma}{1-\gamma}  \cdot   \mathrm{D}_\mathrm{TV}\left ( \mu \parallel \mu \oplus \nu \right ) \left [ s \right ],
\end{equation}
where $\delta \left ( \mu ,\nu  \right ) \left [ s \right ]=\sum _s d^{\mu \oplus \nu} \left ( s \right )\sum _a\left ( \mu \oplus \nu \left ( a\mid s \right ) -\mu \left ( a\mid s \right ) \right )$ denotes the state-wise policy divergence and $\mathrm{D}_\mathrm{TV}\left ( \mu \parallel \mu \oplus \nu \right ) \left [ s \right ]$ represents the total variation distance between the original and attacked policies, and $\gamma$ is the temporal discount factor. 
\end{theorem}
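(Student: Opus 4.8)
The plan is to obtain the necessary condition as a direct consequence of the CPO performance-difference bounds in Eq.~\ref{eq:CPO}, instantiated on the policy pair formed by the attacked policy $\mu \oplus \nu$ and the clean policy $\mu$. Because the divergence $\delta(\mu,\nu)[s]$ is averaged against the attacked visitation distribution $d^{\mu \oplus \nu}$, the correct instantiation is to take $\pi = \mu \oplus \nu$ as the baseline and $\pi' = \mu$ as the comparison policy, so that every state expectation in the bound is taken under $d^{\mu \oplus \nu}$. With this choice the upper bound reads $J(\mu) - J(\mu \oplus \nu) \le D^+_{\mu \oplus \nu, f}(\mu)$.

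First I would restate the attack-success hypothesis. Since $V^\mu(s) - V^{\mu \oplus \nu}(s) \ge 0$ holds for every $s$, integrating against the initial-state distribution shows that a successful attack does not increase the victim's expected return, i.e. $J(\mu) - J(\mu \oplus \nu) \ge 0$. Chaining this with the CPO upper bound gives $D^+_{\mu \oplus \nu, f}(\mu) \ge J(\mu) - J(\mu \oplus \nu) \ge 0$, so the entire necessary condition collapses to the single inequality $D^+_{\mu \oplus \nu, f}(\mu) \ge 0$.

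The remaining work is algebraic. I would expand $D^+_{\mu \oplus \nu, f}(\mu)$ using its definition and rewrite the surrogate term $L_{\mu \oplus \nu, f}(\mu)$ by clearing the importance ratio, $\left(\frac{\mu(a\mid s)}{\mu \oplus \nu(a\mid s)} - 1\right)\mu \oplus \nu(a\mid s) = \mu(a\mid s) - \mu \oplus \nu(a\mid s)$, which turns $L_{\mu \oplus \nu, f}(\mu)$ into $\sum_s d^{\mu \oplus \nu}(s)\sum_a\bigl(\mu(a\mid s) - \mu \oplus \nu(a\mid s)\bigr)\delta_f(s,a,s')$, matching $-\delta(\mu,\nu)[s]$ once the $\delta_f$ weighting in the definition of $\delta(\mu,\nu)[s]$ is made explicit. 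Substituting this identity into $D^+_{\mu \oplus \nu, f}(\mu) \ge 0$, multiplying through by $1-\gamma$, and invoking the symmetry $\mathrm{D}_\mathrm{TV}(\mu \oplus \nu \parallel \mu) = \mathrm{D}_\mathrm{TV}(\mu \parallel \mu \oplus \nu)$ then yields $\delta(\mu,\nu)[s] \le \frac{2\gamma\,\xi^\mu_f}{1-\gamma}\,\mathbb{E}_{s\sim d^{\mu \oplus \nu}}\!\left[\mathrm{D}_\mathrm{TV}(\mu \parallel \mu \oplus \nu)[s]\right]$, and normalizing the constant $\xi^\mu_f$ through the free function $f$ and reading the bound pointwise in $s$ recovers the stated form.

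I expect the main obstacle to be the bookkeeping that identifies $L_{\mu \oplus \nu, f}(\mu)$ with $-\delta(\mu,\nu)[s]$: one must expand the importance-sampling ratio back to the raw action-probability difference weighted by $\delta_f$ while keeping the baseline/comparison roles and the overall sign consistent, since the reversed assignment $\pi=\mu,\ \pi'=\mu\oplus\nu$ would instead produce a $d^\mu$-averaged lower bound and fail to match the statement. The secondary difficulty is justifying the treatment of the constant $\xi^\mu_f$ and the passage from the $d^{\mu \oplus \nu}$-averaged total variation to the per-state quantity written in the theorem; both rest on the freedom in selecting $f$ and on interpreting the inequality pointwise over $\mathcal{S}$.
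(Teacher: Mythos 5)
Your proposal follows essentially the same route as the paper's own proof: it instantiates the CPO upper bound with the attacked policy $\mu \oplus \nu$ as the baseline (so all expectations are under $d^{\mu \oplus \nu}$ and the ratio $\mu/\mu\oplus\nu$ appears), uses attack success to force the upper bound to be nonnegative, clears the importance ratio to recover $-\delta(\mu,\nu)[s]$, and absorbs the reward-dependent constant via the free choice of $f$ (the paper takes $f \equiv 0$ so $\delta_f = R$ and $\xi_f = |R_{\mathrm{max}}|$ cancels from both sides). The remaining loosenesses you flag — the $\delta_f$ weighting inside the surrogate and the passage from the $d^{\mu\oplus\nu}$-averaged total variation to the per-state bound — are handled in the paper at the same level of informality, so your argument matches it in both structure and rigor.
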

\begin{proof}
We analyze the discrepancy in expected returns between the victim agent’s policy before and after the attack. Our derivation follows the part of prior work on bounding value function differences but is specifically adapted to adversarial attack settings. We begin by considering an upper bound on the value function difference between the victim's original and attacked policies. Building upon the theoretical results on policy performance differences outlined in Eq.~\ref{eq:CPO}, we derive the following bounds:
\begin{equation*}
\begin{aligned}
& V^\mu \left ( s \right ) - V^{\mu\oplus \nu } \left ( s \right ) \\
& \le \underbrace {\frac{1}{1-\gamma } \mathbb{E}_{s\sim d^{\mu \oplus \nu},a\sim \mu \oplus \nu   }\left [\left ( \frac{\mu  \left ( a\mid s \right ) }{\mu \oplus \nu  \left ( a\mid s \right )} -1 \right )R\left ( s,a \right ) \right ] }_{\text{first term of r.h.s.}}\\
& \underbrace {+ \frac{2\gamma }{\left ( 1-\gamma  \right )^2 } \max_{s,a\sim \mu } \left | R\left ( s,a \right ) \right |   \mathbb{E} _{s\sim d^{\mu \oplus \nu}}\left [ \mathrm{D}_\mathrm{TV}\left ( \mu \parallel   \mu \oplus \nu    \right ) \left [ s \right ] \right ]}_{\text{second term of r.h.s.}}, 
\end{aligned}
\end{equation*}
where the first term measures the direct impact of the policy shift on expected rewards, while the second term quantifies the uncertainty introduced by policy divergence. 

We assume that the reward function is independent of the next state, i.e., $R: \mathcal{S} \times \mathcal{A} \to \mathcal{R}$, and assume function $f:\mathcal{S} \to \left \{ 0 \right \} $ to simplify the derivation. Under these assumptions, we bound the first term of the right-hand side  as
\begin{flalign*}
& {\frac{1}{1-\gamma } \mathbb{E}_{s\sim d^{\mu \oplus \nu},a\sim \mu \oplus \nu   }\left [\left ( \frac{\mu  \left ( a\mid s \right ) }{\mu \oplus \nu  \left ( a\mid s \right )} -1 \right )R\left ( s,a \right ) \right ] } \\
& = \frac{1}{1-\gamma } \sum_{s} d^{\mu \oplus \nu}\left ( s \right )  \mathbb{E}_{a\sim \mu \oplus \nu  }\left [\left ( \frac{\mu  \left ( a\mid s \right ) }{\mu \oplus \nu  \left ( a\mid s \right )} -1 \right )R\left ( s,a \right ) \right ] \\
& = \frac{1}{1-\gamma } \sum_{s} d^{\mu \oplus \nu}\left ( s \right ) \sum_{a} \left [ \mu  \left ( a\mid s \right ) -\mu \oplus \nu  \left ( a\mid s \right )  \right ]R\left ( s,a \right ) \\
& \le - \frac{1}{1-\gamma } \max_{s,a\sim \mu \oplus \nu } \left | R\left ( s,a \right ) \right |  \delta \left ( \mu ,\nu  \right ) \left [ s \right ]. \\
\end{flalign*}
Similarly, we bound the second term of the right-hand side  as
\begin{flalign*}
\small 
& {\frac{2\gamma }{\left ( 1-\gamma  \right )^2 } \max_{s,a\sim \mu } \left | R\left ( s,a \right ) \right |   \mathbb{E} _{s\sim d^{\mu \oplus \nu}}\left [ \mathrm{D}_\mathrm{TV}\left ( \mu \parallel   \mu \oplus \nu    \right ) \left [ s \right ] \right ]} & \\
& = \!\frac{2\gamma }{ 1-\gamma  }\! \max_{s,a\sim \mu   } \left | R\left ( s,a \right )  \right | {\sum_{k=0}^{\infty }\!\gamma ^k P\!\left ( s_k\!=\!s\!\mid \!\mu \!\oplus\! \nu    \right ) \mathrm{D}_\mathrm{TV}\left ( \mu \parallel   \mu \oplus \nu    \right ) \!\left [ s \right ] } & \\
& \le \frac{2\gamma }{ \left ( 1-\gamma \right )^2 } \max_{s,a\sim \mu   } \left | R\left ( s,a \right )  \right | \mathrm{D}_\mathrm{TV}\left ( \mu \parallel   \mu \oplus \nu    \right ) \left [ s \right ]. &
\end{flalign*}

Since $ \max_{s,a\sim \mu \oplus \nu } \left | R\left ( s,a \right ) \right | =  \max_{s,a\sim \mu   } \left | R\left ( s,a \right )  \right |=\left |R_{\mathrm{max}}\right |$, we can derive an upper bound on the change in value function under pre- and post-attack:
\begin{equation*}
\begin{aligned}
&  V^\mu \left ( s \right ) - V^{\mu\oplus \nu } \left ( s \right ) \\
& \le \frac{1}{1-\gamma } \left |R_{\mathrm{max}}\right |\left [ \frac{2\gamma}{1-\gamma }\mathrm{D}_\mathrm{TV}\left ( \mu \parallel   \mu \oplus \nu    \right ) \left [ s \right ]- \delta \left ( \mu ,\nu  \right ) \left [ s \right ]\right ]. 
\end{aligned}
\end{equation*}

In DRL, adversarial attacks aim to reduce the expected return of the victim agent, satisfying $ V^\mu \left ( s \right ) - V^{\mu\oplus \nu } \left ( s \right ) \ge 0$. Thus, we can obtain:
\begin{equation*} \frac{1}{1-\gamma } \left |R_{\mathrm{max}}\right |\left [ \frac{2\gamma}{1-\gamma }\mathrm{D}_\mathrm{TV}\left ( \mu \parallel   \mu \oplus \nu    \right ) \left [ s \right ]- \delta \left ( \mu ,\nu  \right ) \left [ s \right ]\right ] \ge 0.
\end{equation*}
Then, we rearrange terms, yields
\begin{equation*}
\begin{aligned}
\delta \left ( \mu ,\nu  \right ) \left [ s \right ] \le \frac{2\gamma}{1-\gamma }\mathrm{D}_\mathrm{TV}\left ( \mu \parallel   \mu \oplus \nu    \right ) \left [ s \right ].
\end{aligned}
\end{equation*}
This concludes the proof.
\end{proof}
{\bf Theorem~\ref{the:1}} derives a necessary condition for successful adversarial attacks and reveals a key property of effective adversarial strategies: the stealthiness of perturbations. To be specific, it connects the cumulative changes in the action selections of the victim agent’s policy $\delta(\mu, \nu)$ with the divergence in the victim’s policy distributions before and after the attack, quantified by the total variation distance $\mathrm{D}_\mathrm{TV}\left ( \mu \parallel \mu \oplus \nu \right ) \left [ s \right ]$ in the AVD-MDP process. {\bf Theorem~\ref{the:1}} indicates that the sum of deviations induced by the adversarial policy through state perturbations must bounded by the total variation distance for the attack to succeed.

ADV-MDP can represent a white-box attack process by generating adversarial examples according to Eq.~\ref{eq:2}, where the adversarial agent has full knowledge of the victim's policy. This enables the adversarial agent to strategically optimize perturbations to maximize the policy shift while ensuring the perturbation remains within a predefined constraint. Consequently, the total variation distance between the victim’s pre- and post-attack policies is determined by the constraint and the loss function, which can be approximated as a predefined constant. This leads to an intuitive conclusion consistent with {\bf Theorem~\ref{the:1}}: if the perturbation is too large, the attacked states may be misclassified as distinct, causing the victim to select different actions to evade the attack. For the attack to be effective, the adversarial policy must precisely induce meaningful shifts in the victim’s policy, while keeping perturbations minimal. In other words, the success of an attack hinges on the delicate balance between stealthiness and interference, maximizing control over the victim’s decision-making and ensuring under limited perturbation constraints.

To further refine this analysis, we explore a sufficient condition for a successful attack under a stronger assumption: that the victim agent is capable of receiving negative rewards in the environment. Although this assumption is more restrictive, it allows us to derive a concrete condition that guarantees the success of adversarial attacks, as presented in the following theorem.


\begin{theorem}\label{the:2} {\bf{(Sufficient Condition for Attack Success)}} 
Assume the victim agent can receive negative rewards $\min_{s,a} R\left ( s,a \right ) < 0$. Given a victim agent's policy $\mu \left ( \cdot \mid s \right ) $ and an adversarial agent's policy $\nu  \left ( \cdot \mid s,\mu  \right ) $, let $\delta \left ( \mu ,\nu  \right ) \left [ s \right ]=\sum _s d^{\mu \oplus \nu} \left ( s \right )\sum _a\left ( \mu \oplus \nu \left ( a\mid s \right ) -\mu \left ( a\mid s \right )\right )$, a sufficient condition to successfully attack the victim agent is given by
\begin{equation}
\delta \left ( \mu ,\nu  \right ) \left [ s \right ] \ge  \kappa \cdot  \max_{s}P\left ( s\mid \mu \oplus \nu  \right ) \mathrm{D}_\mathrm{TV}\left ( \mu \parallel   \mu \oplus \nu    \right ) \left [ s \right ],   
\end{equation}
where $\delta \left ( \mu ,\nu  \right ) \left [ s \right ]=\sum _s d^{\mu \oplus \nu} \left ( s \right )\sum _a\left ( \mu \oplus \nu \left ( a\mid s \right ) -\mu \left ( a\mid s \right )\right )$, $\kappa = - 2\gamma\left |R_{\mathrm{max}}\right| /\left (  1-\gamma\right )R_{\mathrm{min}}$ is a constant and $\max_{s}P\left ( s\mid \mu \oplus \nu  \right )$ is the upper bound of the state-visiting probability during the attack.
\end{theorem}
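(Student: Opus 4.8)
The plan is to run the Theorem~\ref{the:1} argument in reverse, invoking the \emph{lower} end of the CPO sandwich in Eq.~\ref{eq:CPO} rather than its upper end. I would again identify $\pi=\mu\oplus\nu$ and $\pi'=\mu$ and adopt the same simplifications $f:\mathcal{S}\to\{0\}$ and a reward independent of the next state, so that $\delta_f(s,a,s')=R(s,a)$. The bound $D^-_{\pi,f}(\pi')\le J(\pi')-J(\pi)$ then gives, in the state-wise form used throughout,
\[
V^\mu(s)-V^{\mu\oplus\nu}(s)\ge\frac{L_{\mu\oplus\nu,f}(\mu)}{1-\gamma}-\frac{2\gamma\,\xi^\mu_f}{(1-\gamma)^2}\,\mathbb{E}_{s\sim d^{\mu\oplus\nu}}\big[\mathrm{D}_\mathrm{TV}(\mu\parallel\mu\oplus\nu)[s]\big].
\]
The key observation is that a \emph{sufficient} condition for a successful attack, $V^\mu(s)-V^{\mu\oplus\nu}(s)\ge0$, is that this lower bound itself be non-negative, i.e.\ that the first term dominate the second. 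This is the exact dual of Theorem~\ref{the:1}, where non-negativity of the \emph{upper} bound yielded only a necessary condition.

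Next I would estimate the two terms in the opposite directions to Theorem~\ref{the:1}: lower-bounding the first and upper-bounding the second. After expanding $L_{\mu\oplus\nu,f}(\mu)=\sum_s d^{\mu\oplus\nu}(s)\sum_a[\mu(a\mid s)-\mu\oplus\nu(a\mid s)]R(s,a)$, I would factor the minimal reward out of the state-action sum, using $R(s,a)\ge R_{\min}$ and the definition of $\delta(\mu,\nu)[s]$, to obtain $L_{\mu\oplus\nu,f}(\mu)\ge -R_{\min}\,\delta(\mu,\nu)[s]=|R_{\min}|\,\delta(\mu,\nu)[s]$, and hence $\text{(first term)}\ge\frac{|R_{\min}|}{1-\gamma}\,\delta(\mu,\nu)[s]$. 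Here the hypothesis $R_{\min}<0$ is indispensable: it makes $-R_{\min}=|R_{\min}|$ strictly positive, so the first term is controlled from below by a \emph{positive} multiple of $\delta(\mu,\nu)[s]$. For the second term I would use $\xi^\mu_f\le|R_{\max}|$ and the unrolled visitation weight $d^{\mu\oplus\nu}(s)=(1-\gamma)\sum_k\gamma^kP(s_k=s\mid\mu\oplus\nu)$; bounding each $P(s_k=s\mid\mu\oplus\nu)$ by $\max_s P(s\mid\mu\oplus\nu)$ and summing the geometric series yields $\text{(second term)}\le\frac{2\gamma|R_{\max}|}{(1-\gamma)^2}\max_s P(s\mid\mu\oplus\nu)\,\mathrm{D}_\mathrm{TV}(\mu\parallel\mu\oplus\nu)[s]$.

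Finally I would assemble the two estimates. Since the first term exceeds the second whenever its lower bound exceeds the upper bound of the second, it suffices that
\[
\frac{|R_{\min}|}{1-\gamma}\,\delta(\mu,\nu)[s]\ge\frac{2\gamma|R_{\max}|}{(1-\gamma)^2}\max_s P(s\mid\mu\oplus\nu)\,\mathrm{D}_\mathrm{TV}(\mu\parallel\mu\oplus\nu)[s].
\]
Dividing by $|R_{\min}|/(1-\gamma)>0$---which is legitimate, and direction-preserving, precisely because $R_{\min}<0$---isolates $\delta(\mu,\nu)[s]$ and produces the claimed inequality with the constant $\kappa=-2\gamma|R_{\max}|/((1-\gamma)R_{\min})>0$.

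I expect the crux to be the first-term lower bound. The $\xi^\mu_f$ and geometric-series estimates transfer almost verbatim from Theorem~\ref{the:1}, but extracting $R_{\min}$ from a signed state-action-difference sum is the delicate step, and it is exactly what forces the negative-reward assumption and fixes the sign of $\kappa$. I would also be careful to track which side of the CPO sandwich is in play: the necessary-versus-sufficient distinction between Theorem~\ref{the:1} and this result rests entirely on using $D^-$ in place of $D^+$, so conflating the two would silently collapse the argument back into the necessary condition.
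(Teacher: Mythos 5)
Your proposal follows essentially the same route as the paper's proof: the $D^-$ side of the CPO bound with $f\equiv 0$ and state-independent rewards, a lower bound on the first term extracting $R_{\mathrm{min}}$ via $\delta(\mu,\nu)[s]$, an upper bound on the second term via the unrolled visitation distribution and $\max_s P(s\mid\mu\oplus\nu)$, and the final rearrangement using $R_{\mathrm{min}}<0$ to obtain $\kappa$. Your framing of the sufficiency logic (requiring the lower bound itself to be non-negative) is in fact a cleaner statement of what the paper does, so this is a faithful match.
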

\begin{proof}
The proof proceeds similarly to Theorem~\ref{the:1}, utilizing the lower bounds on expected returns for two arbitrary policies as established in Eq.~\ref{eq:CPO}. We have:
\begin{equation*}
\begin{aligned}
& V^\mu \left ( s \right ) - V^{\mu\oplus \nu } \left ( s \right ) \\
& \ge  \underbrace {\frac{1}{1-\gamma } \mathbb{E}_{s\sim d^{\mu \oplus \nu },a\sim \mu \oplus \nu }\left [\left ( \frac{\mu  \left ( a\mid s \right ) }{\mu \oplus \nu  \left ( a\mid s \right )} -1 \right )R\left ( s,a \right ) \right ] }_{\text{first term of r.h.s.}}\\
& \underbrace {- \frac{2\gamma }{\left ( 1-\gamma  \right )^2 } \max_{s,a\sim \mu  } \left | R\left ( s,a \right ) \right |   \mathbb{E} _{s\sim d^{\mu \oplus \nu }}\left [ \mathrm{D}_\mathrm{TV}\left ( \mu \oplus \nu \parallel   \mu    \right ) \left [ s \right ] \right ]}_{\text{second term of r.h.s.}} 
\end{aligned}
\end{equation*}
Assuming the reward function $R: \mathcal{S} \times \mathcal{A} \to \mathcal{R}$ is independent of the next state, and $\min_{s,a} R(s,a) \ge 0$, we derive from the right-hand terms:
\begin{flalign*}
& {\frac{1}{1-\gamma } \mathbb{E}_{s\sim d^{\mu \oplus \nu },a\sim \mu \oplus \nu }\left [\left ( \frac{\mu  \left ( a\mid s \right ) }{\mu \oplus \nu  \left ( a\mid s \right )} -1 \right )R\left ( s,a \right ) \right ] } \\
& = \frac{1}{1-\gamma } \mathbb{E}_{s\sim d^{\mu \oplus \nu },a\sim \mu \oplus \nu }\left [\left ( \frac{\mu  \left ( a\mid s \right ) }{\mu \oplus \nu  \left ( a\mid s \right )} -1 \right )R\left ( s,a \right ) \right ] \\
& = \frac{1}{1-\gamma } \sum_{s} d^{\mu \oplus \nu }\left ( s \right ) \sum_{a} \left [ \mu  \left ( a\mid s \right ) -\mu \oplus \nu  \left ( a\mid s \right )  \right ]R\left ( s,a \right ) \\
& \ge - \frac{1}{1-\gamma } \min_{s,a\sim {\mu \oplus \nu}}  R\left ( s,a \right )  \delta \left ( \mu ,\nu  \right ) \left [ s \right ]. 
\end{flalign*}

By combining both terms, we can derive a lower bound on the change in value function under pre- and post-attack:
\begin{equation*}
\small
\begin{aligned}
\footnotesize 
& - \frac{2\gamma }{\left ( 1-\gamma  \right )^2 } \max_{s,a\sim \mu  } \left | R\left ( s,a \right ) \right |   \mathbb{E} _{s\sim d^{\mu \oplus \nu }}\left [ \mathrm{D}_\mathrm{TV}\left ( \mu \oplus \nu \parallel   \mu    \right ) \left [ s \right ] \right ] & \\
& = -\frac{2\gamma }{ 1-\gamma} \!\! \max_{s,a\sim \mu   } \left | R\left ( s,a \right )  \right |\!\! \!\!\!\!\phantom{=}{\sum_{k=0}^{\infty }\!\gamma ^k P\!\left ( s_k=s\!\mid\! \mu \!\oplus \!\nu  \!  \right ) \mathrm{D}_\mathrm{TV}\left ( \mu \!\parallel \!  \mu \oplus \!\nu \!   \right ) \!\left [ s \right ] } & \\
& \ge - \frac{2\gamma }{ \left ( 1-\gamma \right )^2 } \max_{s,a\sim \mu   } \left | R\left ( s,a \right )  \right | \max_s P\left ( s\mid \mu \!\oplus\! \nu  \right ) \phantom{\ge} \!\!\!\!\mathrm{D}_\mathrm{TV}\left ( \mu \!\parallel   \!\mu \!\oplus\! \nu    \right ) \!\left [ s \right ].
\end{aligned}
\end{equation*}

Given $ V^\mu \left ( s \right ) - V^{\mu\oplus \nu } \left ( s \right ) \ge 0$, we can derive a sufficient condition for a successful attack:
\begin{equation*}
\footnotesize
\begin{aligned}
 - \frac{{{R_{{\rm{min}}}}}}{{1 - \gamma }}\delta \left( {\mu ,\nu } \right)\left[ s \right] &\ge \frac{{2\gamma }}{{{{\left( {1 - \gamma } \right)}^2}}}\left| {{R_{{\rm{max}}}}} \right|{\max _s}P\left( {s\!\mid\! \mu  \!\oplus\! \nu } \right){{\rm{D}}_{{\rm{TV}}}}\left( {\mu \!\parallel\! \mu  \oplus \nu } \right)\left[ s \right] \\
\delta \left( {\mu ,\nu } \right)\left[ s \right] &\ge \frac{{ - 2\gamma \left| {{R_{{\rm{max}}}}} \right|}}{{\left( {1 - \gamma } \right){R_{{\rm{min}}}}}}{\max _s}P\left( {s\mid \mu  \oplus \nu } \right){{\rm{D}}_{{\rm{TV}}}}\left( {\mu \parallel \mu  \oplus \nu } \right)\left[ s \right],
\end{aligned}
\end{equation*}
where $R_{\mathrm{min}}=\min_{s,a\sim \mu \oplus \nu }  R\left ( s,a \right ) $ represents the minimum reward during attacked rollouts, and $\left |R_{\mathrm{max}}\right |=\max_{s,a\sim \mu   } \left | R\left ( s,a \right )  \right |$ represents the maximum absolute reward during attack-free rollouts. 
\end{proof}

{\bf{Theorem~\ref{the:2}}} reveals a sufficient condition for a successful adversarial attack, highlighting a crucial property of attack strategies: dispersion in state visitation. An attack succeeds if the cumulative policy changes during the attack, denoted by $\delta(\mu, \nu)[s]$, surpasses the product of the maximum state-visiting probability $\max_s P(s | \mu \oplus \nu)$ and the total variation distance $\mathrm{D}_\mathrm{TV}(\mu \parallel \mu \oplus \nu)[s]$. In essence, if the adversarial policy can reduce visitation probabilities for certain states, making the victim agent's state distribution more dispersed, the attack is significantly enhanced. An important insight from {\bf{Theorem~\ref{the:2}}} is that adversarial policies should manipulate the victim agent's state distribution not only by increasing the policy changes but also by dispersing the state visitation probabilities. This dispersion prevents the victim agent from focusing on specific states, complicating its ability to maintain an optimal policy. By exploiting vulnerabilities across a broader range of states, the adversarial policy increases attack success rates.

\begin{figure*}[ht]
    \centering
    \includegraphics[width=0.6\linewidth]{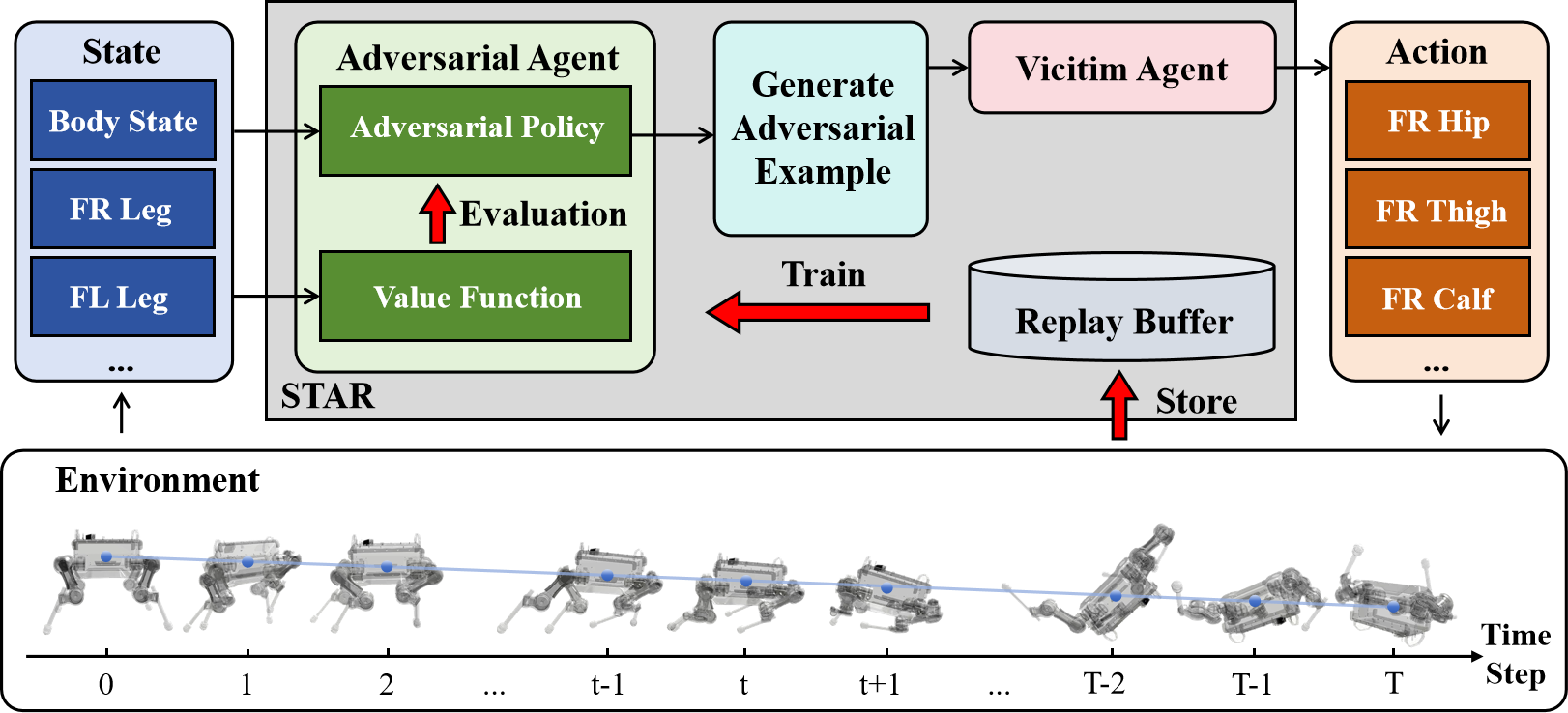}
    \caption{The workflow of STAR.} 
    \label{fig:Overview.}
\end{figure*}

\section{The STAR Framework}
To develop a practical attack algorithm, we analyze the necessary and sufficient conditions for successful attacks in Section~\ref{sec:ta} and derive the following insights:

\begin{itemize}
\item \textbf{Stealthiness of Perturbations}: The adversarial policy must strike a delicate balance between minimizing the expect return of victim's policy while maintaining the perturbation constrained condition. As shown in {\bf{Theorem~\ref{the:1}}}, the cumulative changes in the victim's action selections must be bounded by the total variation distance between policy distributions. This implies that effective attacks necessitates precise and stealthy perturbations to avoid triggering unintended policy responses from the victim agent.

\item \textbf{Dispersion in State Visitation}: {\bf{Theorem~\ref{the:2}}} reveals a substantial improvement in attack effectiveness when dispersing the victim agent's state-visitation distribution. To achieve this, the adversarial policy should strategically redistribute visitation probabilities, preventing the victim agent from concentrating its behavior on specific states. This dispersed approach not only makes it harder for the victim agent to maintain an optimal policy but also ensures more effective utilization of the perturbation budget under the same constraints.
\end{itemize}

To enforce stealthiness and dispersion, we propose a DRL-based white-box attack method called \textbf{S}elective S\textbf{T}ate-\textbf{A}ware \textbf{R}einforcement adversarial attack (STAR). STAR employs a soft mask-based state-targeting mechanism to enhance the stealthiness of the attack by minimizing perturbations on redundant state dimensions. Furthermore, it optimizes an information-theoretic objective that maximizes mutual information between adversarial perturbations, environmental states, and victim actions to ensure a dispersed state-visitation distribution for the victim agent under attack. By integrating these strategies, STAR effectively induces the victim agent into vulnerable states while concentrating perturbations on critical dimensions, maintaining stealthy attack through carefully constrained perturbations. As illustrated in Figure~\ref{fig:Overview.}, STAR first extracts state information from the environment (e.g., robot body state and front right/left leg states). The adversarial agent, consisting of an adversarial policy and a value function, evaluates and generates adversarial examples targeting the victim agent. These perturbations are then injected into the victim agent, systematically disrupting its decision-making process and consequently inducing sub-optimal action selections. STAR employs a reinforcement learning paradigm with an experience replay mechanism, enabling continuous optimization of the adversarial policy and value function during training.

\subsection{Optimization Objective}

The stealthiness of perturbations necessitates gradient-based attack methods, where sign gradients remain an optimal choice. This is because sign gradients maintain the maximum allowable perturbation magnitude in the direction of the steepest descent, ensuring efficient exploitation of the perturbation budget. As a result, STAR adopts adaptive-magnitude perturbations aligned with the sign gradient direction, dynamically adjusting the perturbation magnitude to balance attack effectiveness and stealthiness. Furthermore, STAR incorporates a soft mask function defined as $M_{\mathrm{soft}}(s) = \beta M(s) + (1-\beta)(1-M(s))$ where $M :\mathcal{S} \to \left \{ 0,1 \right \}$ is a binary mask identifying the critical state dimensions, and $\beta \in \left ( 0,1 \right ) $ is an interpolation factor controlling the trade-off between critical and redundant dimensions. The adversarial example generated by the adversarial policy is then formulated as
\begin{equation}
\label{eq:adversarial policy}
\nu \left ( \cdot \mid s,\mu  \right ) =\epsilon \cdot M_{\mathrm{soft}}(s) \cdot \mathrm{sign}\left ( \nabla _s J \left ( s',a \right )  \right ),
\end{equation}
where $s'= s+\varepsilon \cdot \mathcal{N} \left ( \mathbf{0},\mathbf{I}   \right )$, $a\sim \mu \left ( \cdot \mid s\right )$, $\varepsilon \in \left ( 0,1 \right ] $ is the scaling factor, and $\mathcal{N} \left ( \mathbf{0},\mathbf{I}   \right )$ represents a multivariate standard normal distribution that introduces small perturbations to prevent gradient vanishing. Other notations retain their definitions in Eq.~\ref{eq:3} and Eq.~\ref{eq:4}.
As shown in Figure~\ref{fig:2}, adversarial example generation follows a two-branch structure: the magnitude branch and the direction branch. The magnitude branch employs a mask function and an interpolation mechanism to generate a soft mask, dynamically adjusting the perturbation magnitudes while prioritizing critical state dimensions. The direction branch calculates the perturbation direction using Gaussian noise and sign gradients, ensuring effective perturbations along the steepest descent direction. Finally, Finally, the outputs from both branches are combined to generate adversarial perturbations, which are applied to the original state.

The dispersion in state visitation indicates that the optimization objective of STAR necessities an additional term in Eq.~\ref{eq:6}, aiming to strategically diversify the state-visitation distribution of the victim agent. This is formalized using information-theoretic concepts. Let $I\left ( \eta ; s \right ) $ represent the mutual information between adversarial perturbations $\eta$ and states $s$, which quantifies the shared information between perturbations and environmental states. We define $I\left ( \eta ;a\mid s \right )$ as the conditional mutual information between adversarial perturbations $\eta$ and victim agent's actions $a$ in the state $s$. By maximizing $I\left ( \eta ; s \right )$, we ensure that perturbations are closely aligned with current states that have the most significant impact on the victim agent's policy. Simultaneously, maximizing $I\left ( \eta ; a \mid s \right )$ guarantees that the perturbations effectively influence the victim agent's actions based on the current states, thereby disrupting its ability to follow an optimal policy. The combined maximization of total mutual information $I\left ( \eta ; s \right ) +I\left ( \eta ;a\mid s \right ) $ forces the victim agent to deviate from its attack-free state-visitation patterns, leading to a more dispersed distribution and hindering its capacity to maintain optimal performance.

To further enhance the optimization objective of STAR, we minimize the conditional entropy $\mathcal{H}(a|s)$, which reduces the uncertainty in the victim agent's action selection $a$ for the given state $s$. This strengthens dependencies between states and actions, thereby constraining the victim agent's policy to a more focused region within the action space. In summary, STAR maximizes the total mutual information, which could be expressed as:
\begin{equation}
\label{eq:13}
\begin{aligned}
& \phantom{=} I\left ( \eta ; s \right ) +I\left ( \eta ;a\mid s \right ) - \mathcal{H} \left ( a ;s \right ) \\
& = \mathcal{H} \left ( \eta \right ) - \mathcal{H} \left ( \eta\mid s \right ) + \mathcal{H} \left ( \eta\mid s \right )-\mathcal{H} \left ( \eta\mid s,a \right ) - \mathcal{H} \left ( a \mid s \right )\\
& = \mathcal{H} \left ( \eta \right )-\mathcal{H} \left ( \eta\mid s,a \right ) -\mathcal{H} \left ( a\mid s\right )
\end{aligned}
\end{equation}
\noindent where $\mathcal{H} \left ( \cdot  \right ) $ represents the Shannon entropy,  $\mathcal{H} \left ( \eta \right )$ indicates the necessity to maximize the entropy of the adversarial policy's prior distribution. In practice, we adopt a uniform distribution as the prior distribution of the adversarial policy. The terms $-\mathcal{H} \left ( \eta\mid s,a \right )$ and $-\mathcal{H} \left ( a\mid s\right )$ imply that, given state $s$, both the victim agent's and the adversarial agent's policy distributions should exhibit low uncertainty. Consequently, maximizing the objective function presented in Eq.~\ref{eq:13} is equivalent to minimize $\mathcal{H} \left ( \eta\mid s,a \right ) +\mathcal{H} \left ( a\mid s\right )$. Combining the previously stated objective of the adversarial agent from Eq.~\ref{eq:6},  we derive the objective function of STAR as follows:
\begin{equation}
\label{eq:15}
\begin{aligned}
J\left ( \nu \right ) = & \min_{\nu} \sum_{t=0}^{\infty }\gamma ^t \mathbb{E}_{a\sim \mu \oplus \nu ,s\sim \mathcal{P} } \left[ R\left ( s_t,a_t \right ) \right. \\
& \left. + \alpha \left [ \mathcal{H} \left ( \nu\left ( \cdot \mid s_t \right ) \right ) +\mathcal{H} \left ( \mu \oplus \nu\left ( \cdot \mid s_t \right ) \right ) \right ] \right]
\end{aligned}
\end{equation}
where $\alpha \in \left ( 0,1 \right ] $ serves as an entropy weight coefficient that balances two optimization objectives: minimizing the victim agent's return via term $R\left ( s_t,a_t \right )$ and concentrating attacks through dual entropy regularization terms $\mathcal{H} \left ( \nu\left ( \cdot \mid s_t \right ) \right ) +\mathcal{H} \left ( \mu \oplus \nu\left ( \cdot \mid s_t \right ) \right )$.

\begin{figure}[t]
  \centering
  \includegraphics[width=0.48\textwidth]{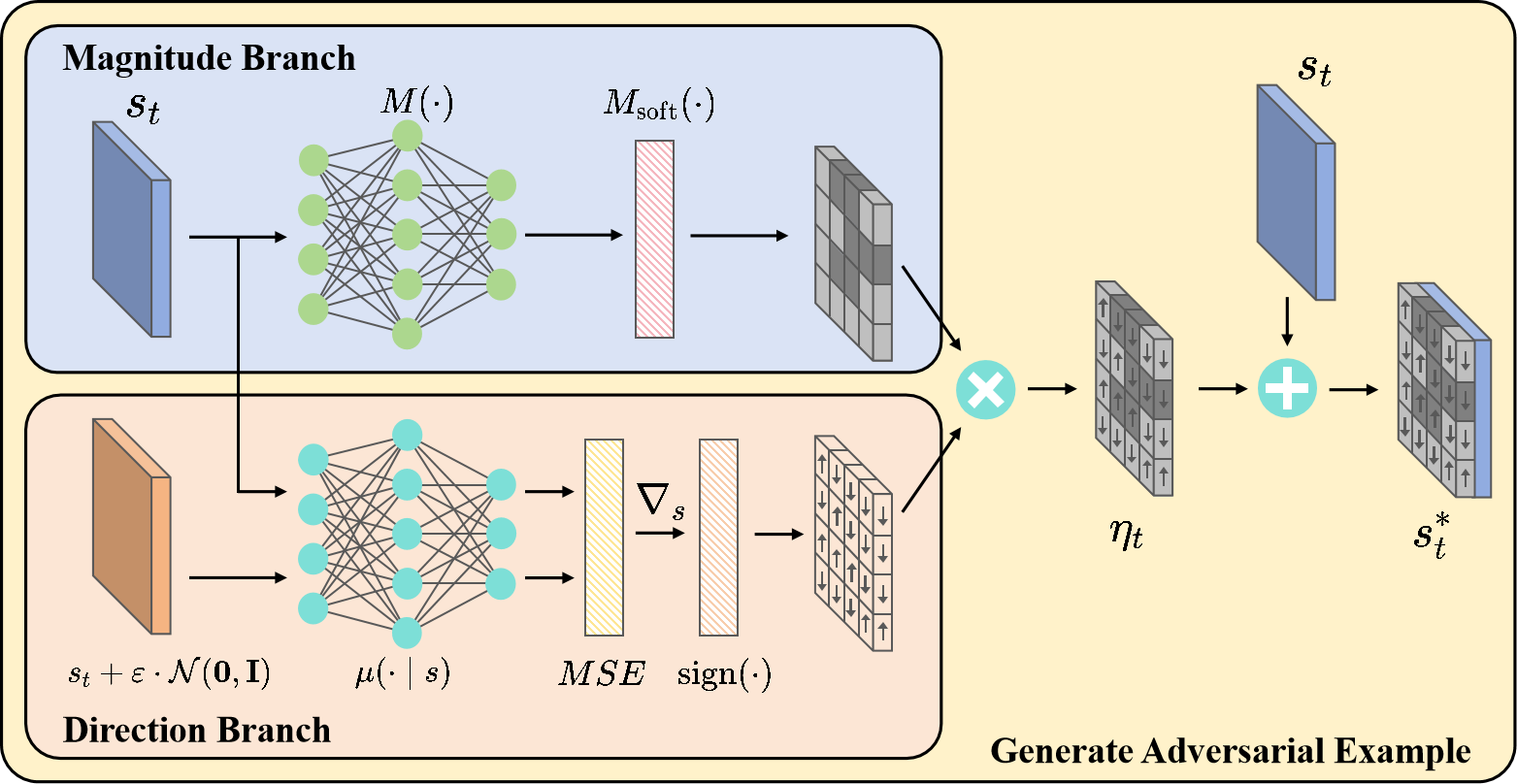}
  \caption{The Adversarial Example Generation Framework of STAR.}
  \label{fig:2}
\end{figure}

\subsection{Training Algorithm}

\begin{algorithm}[t]
\footnotesize
\caption{STAR adversarial attack training procedure}
\label{alg:1}
\begin{flushleft}
\textbf{Input}: victim agent's policy $\mu \left ( \cdot \mid s \right ) $, batch size $N$, entropy weight coefficient $\alpha $, discount factor $\gamma $
\end{flushleft}
\begin{algorithmic}[1] 
\STATE {\bf{Random initialization:}} mask function $M(\cdot )$ with $\theta^M$ and value function 
$V^{\mu \oplus \nu }\left ( \cdot  \right ) $ with $\theta^V$, and replay buffer $\mathcal{D}$
\FOR{each episode}
\STATE Initialize state $s_0$ and $T\gets 0$
\STATE $\eta _0 \sim p\left ( s_0 \right ) $ 
\STATE $a_0 \sim \mu \left ( \cdot \mid s_0+\eta _0 \right )$

\FOR{time step $t$}
\STATE Execute $a_t$, compute reward $r_t=R\left ( s_t,a_t \right ) $, and store transition $\left ( s_t,a_t,r_t,s_{t+1} \right ) $ in $\mathcal{D}$.
\IF{$s_{t+1}$  is terminal}
\STATE $T\gets t+1$
\STATE Calculate $\widehat{R} _t$ and store return in $\mathcal{D}$:
\begin{center} 
$\widehat{R} _t = \sum_{k=0}^{T-t-1} \gamma ^k r_{t+k}+\gamma^{T-t} V^{\mu \oplus \nu }\left ( s_T  \right )$
\end{center}
\STATE  Calculate $\widehat{A} _t$ and store advantage in $\mathcal{D}$:

\begin{center} 
$\widehat{A} _t= \widehat{R}_t -V^{\mu \oplus \nu }\left ( s_t  \right )$
\end{center}
\ENDIF
\STATE $a_t \sim \mu \oplus \nu  \left( \cdot \mid s_t \right )$
\ENDFOR

\FOR{each epoch}
\STATE Sample a batch of  $\left ( s_i,a_i,r_i,s_{i+1},\widehat{R}_i,\widehat{A}_i   \right ) $ from $\mathcal{D}$
\STATE Update the value function $V^{\mu \oplus \nu } $ by minimizing the loss:

$\mathcal{L} \left ( V^{\mu \oplus \nu } \right ) =\frac{1}{N}\sum_{i=0}^{N} \left [ \hat{R}_i -V^{\mu \oplus \nu }\left ( s_i \right )  \right ] ^2$
\STATE Update the mask function $M(\cdot )$ by minimizing the objective:

$J\left ( \nu  \right )  =\frac{1}{N}\sum_{i=0}^{N} \left [ \hat{A}_i +\alpha \log{ \left ( \mu \left ( a_i\mid s_i \right )\nu \left ( \eta _i\mid s_i,\mu  \right )   \right )} \right ]$ 
\ENDFOR
\ENDFOR
\end{algorithmic}
\end{algorithm}

To effectively train the STAR adversarial attack algorithm, we employ an on-policy reinforcement learning framework integrated with Generalized Advantage Estimation (GAE) \cite{schulman2015high}. The on-policy framework is particularly sensitive to subtle variations in the victim agent's behavioral patterns, ensuring that policy updates are consistently derived from the most recent interactions between the attacker and the victim. This approach mitigates the distribution shift issues commonly associated with off-policy methods, which rely on outdated experiences. By leveraging on-distribution samples from current policy trajectories, the on-policy framework facilitates real-time adaptation of perturbation strategies while maintaining update stability. Additionally, STAR utilizes GAE to enhance the efficiency and stability of policy gradient algorithms by providing a more flexible and accurate approximation of the advantage function. The advantage function  $\widehat{A}_t$ quantifies the relative improvement of an action compared to the expected return under the current policy. However, accurate estimation of $\widehat{A}_t$ becomes challenging in environments with sparse or noisy rewards. GAE addresses this challenge by introducing a tunable hyperparameter $\lambda$, which balances short-term rewards and long-term returns while optimizing the bias-variance trade-off. Specifically, the advantage function $\widehat{A}_t$ at time step $t$ is computed as:
\begin{equation}
\begin{aligned}
\widehat{A} _t & = \sum_{k=0}^{T-t-1} \left ( \gamma \lambda  \right ) ^k \delta _{t+k},
\end{aligned}    
\end{equation}
where $\delta _t = r_t+\gamma V^{\mu \oplus \nu }\left ( s_{t+1}  \right )-V^{\mu \oplus \nu }\left ( s_t  \right )$. 
By setting $\lambda = 1$, GAE enables full accumulation of discounted rewards while integrating value function estimates to yield a bias-free advantage function estimator. This setup is particularly suitable for tasks with long-term dependencies, such as adversarial attacks in robotic manipulation, as it captures the complete impact of future rewards without requiring explicit weighting schemes or truncation of temporal difference errors. To quantify state-value expectations under policy ${\mu \oplus \nu}$, STAR employs a parameterized value function $V^{\mu \oplus \nu}$ that is optimized through minimization of the following mean squared error loss:
\begin{equation}
\mathcal{L} \left ( V^{\mu \oplus \nu } \right ) =\mathbb{E} _{ s_i,\hat{R}_i \sim \mathcal{D}  }\left [ \hat{R}_i -V^{\mu \oplus \nu }\left ( s_i \right )  \right ] ^2,
\end{equation}
where $\widehat{R} _i = \sum_{k=0}^{T-i-1} \gamma ^k r_{i+k}+\gamma^{T-i} V^{\mu \oplus \nu }\left ( s_T  \right )$ denotes the expected return and $\mathcal{D}$ represents the replay buffer of on-policy trajectories sampled from the current policy $\mu \oplus \nu$. The mask function $M(s)$ is parameterized by a neural network $\theta^M$. It is optimized by minimizing the objective function in Eq.~\ref{eq:15}, which incorporates both the advantage and entropy regularization terms:
\begin{equation}
J \left ( \nu \right ) =\mathbb{E} _{ s_i,\hat{A}_i \sim \mathcal{D}}\left [ \hat{A}_i +\alpha \log{ \left ( \mu \left ( a_i\mid s_i \right )\nu \left ( \eta _i\mid s_i,\mu  \right )   \right )}  \right ].
\end{equation}

The STAR training algorithm is detailed in Algorithm~\ref{alg:1}. The training process begins by initializing the mask function $M(\cdot)$ with network parameters $\theta^M$, the value function $V^{\mu \oplus \nu}(\cdot)$ with network parameters $\theta^V$ and a replay buffer $\mathcal{D}$ to store transitions (line 1). For each episode, the initial state $s_0$ is set, and the episode length counter $T$ is initialized to zero (line 3). The adversarial perturbation is first sampled from a uniform distribution $p(s_0)$ to maximize the entropy of the adversarial policy’s prior distribution (lines 4-5). During each time step of the episode, the selected action $a_t$ is executed in the environment, yielding a reward $r_t$ and transitioning to the next state $s_{t+1}$. The transition tuple $\left(s_t, a_t, r_t, s_{t+1}\right)$ is stored in the replay buffer $\mathcal{D}$ for subsequent training (line 7). If the episode is not terminal, the adversarial example are generated based on the current state $s_t$ and victim agent's policy, after which the victim agent selects an action $a_t$ according to the adversarial example to continue the interaction in the environment (line 13). If the environment reaches a terminal state $s_{t+1}$, the episode length $T$ is updated. Returns $\widehat{R}_t$ and advantages $\widehat{A}_t$ for all time steps $t$ in the episode are computed and stored in the replay buffer $\mathcal{D}$ (lines 9-11). Training occurs after the collection of trajectories. For each training epoch, a batch of $N$ transitions, returns, and advantages is sampled from the replay buffer $\mathcal{D}$ (line 16). The value function $V^{\mu \oplus \nu}$ is updated by minimizing the mean squared error between the predicted value and the stored returns $\widehat{R}i$ (line 17). The mask function $M(\cdot)$, defined in the adversarial policy in Eq.~\ref{eq:adversarial policy}, is optimized by minimizing the objective $J(\nu)$ (line 18).

\section{Implementation}


\begin{figure}
    \centering
    \includegraphics[width=0.9\linewidth]{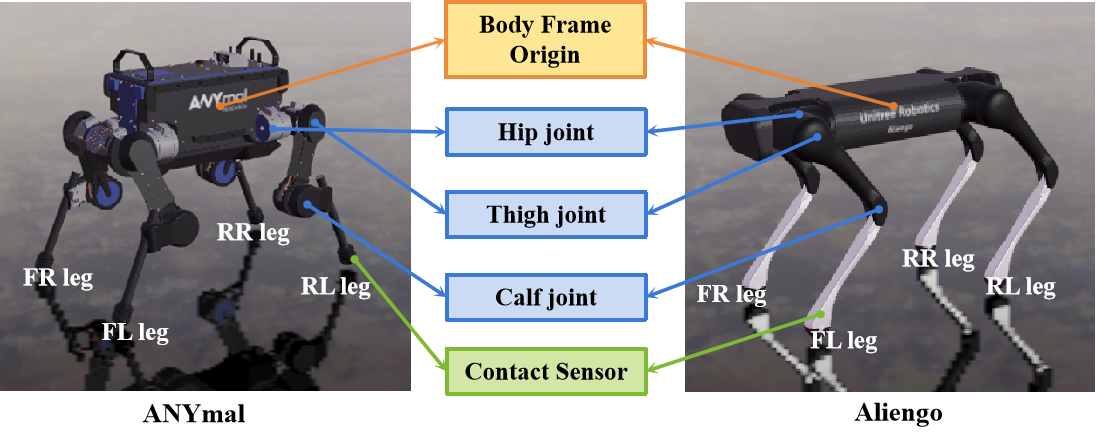}
    \caption{Aliengo and ANYmal, quadruped robots in the Raisim platform \cite{raisim}.}
    \label{fig:robots_demo}
\end{figure}

\begin{table}[]
\renewcommand{\arraystretch}{1.2}
\centering
\caption{State and Action Space Definition}
\begin{tabular}{ll}
\hline
\multicolumn{2}{c}{State Space (34 dimensions)} \\
\hline
Body Height (m) & 0 \\
Body Orientation (rad) & 1:3 \\
Joint Angles (rad) & 3:15 \\
Body Linear Velocity (m/s) & 15:18 \\
Body Angular Velocity (rad/s) & 18:21 \\
Joint Velocities (rad/s) & 21:33 \\
\hline
\multicolumn{2}{c}{Action Space (12 dimensions)} \\
\hline
Front Right Leg (Hip, Thigh, Calf) (rad) & 0:3 \\
Front Left Leg (Hip, Thigh, Calf) (rad) & 3:6 \\
Rear Right Leg (Hip, Thigh, Calf) (rad) & 6:9 \\
Rear Left Leg (Hip, Thigh, Calf) (rad) & 9:12 \\
\hline
\end{tabular}
\label{tab:state_action}
\end{table}

\textbf{Emulation Setup.} 
We evaluate quadrupedal locomotion control and adversarial robustness using the RaiSim platform \cite{raisim}, conducting experiments on two commercial quadruped robots: Aliengo and ANYmal, as shown in Figure~\ref{fig:robots_demo}. Both robots feature a floating base with 6 degrees of freedom (3 for position and 3 for orientation) and 12 actuated joints, with each leg consisting of 3 joints (hip, thigh, and calf). The robots are controlled by a Proportional-derivative controller with feedforward torque, where the proportional gain $P=100.0$ controls the stiffness of position tracking and the derivative gain $D=1.0$ provides damping to reduce oscillations. Each leg’s end-effector has a contact sensor to detect ground interaction. A fall is defined as any non-foot component making contact with the ground.

In each episode, the robot is initialized 0.5 meters above the ground in a standardized stance. The episode runs for a maximum of 4 seconds (400 control steps) or terminates early if a fall occurs. The controller operates at 100 Hz with a control interval of 10 ms, while the physics simulation runs at 400 Hz with a timestep of 2.5 ms. All experiments are conducted on a server equipped with four NVIDIA GeForce RTX 3090 (24GB) GPUs.

\textbf{Victim and Adversarial Agents.}
The victim agent is trained using Proximal Policy Optimization (PPO) \cite{schulman2017proximal}, an on-policy reinforcement learning algorithm that combines trust region optimization with clipped surrogate objectives. It employs a dual-network architecture consisting of an actor network and a critic network. The learning rate is adaptively modulated based on the Kullback-Leibler (KL) divergence between consecutive policy iterations. The network architecture and hyperparameters are summarized in Table \ref{hyper}. The victim agent's reward function balances energy efficiency and performance through torque penalties and forward velocity incentives:
\begin{equation}
R_{\text{vic}}(\tau,v_x) = \zeta  \sum_{i=1}^{12} \tau_i^2 + \kappa \min(v_x, 4.0),
\end{equation}
where $\tau_i$ represents the torque of the $i$-th joint, $v_x$ is the forward velocity, $\zeta = -4 \times 10^{-5}$ is the energy efficiency coefficient penalizing the sum of squared torques, and $\kappa = 0.3$ is the forward velocity coefficient encouraging locomotion with an upper bound of  4.0 m/s. Training spans $1 \times 10^5$ timesteps for each task, with learning trajectories shown in Figure \ref{fig:victim_train}.

The adversarial agent, trained using the STAR framework, consists of an adversarial policy and value function. The masking function within the adversarial policy is parameterized by neural networks, as detailed in Table \ref{hyper}. The adversarial reward is defined as the negative of the victim agent’s reward to minimize artificial effort introduced by reward shaping:
\begin{equation}
R_{\text{adv}} = - R_{\text{vic}}.
\end{equation}
The adversarial agent's training spans $2\times10^3$ steps, with the trajectory shown in Figure \ref{fig:adv_train}.





\begin{table}[]
\renewcommand{\arraystretch}{1.2}
\centering
\caption{Hyperparameters}
\begin{tabular}{ll}
\hline
\multicolumn{2}{c}{Network Architecture} \\
\hline
Victim Policy Network & 2 FC layers, 128 neurons each \\
Victim Value Network & 2 FC layers, 128 neurons each \\
Adversarial Policy Network & 3 FC layers, 64 neurons each \\
Adversarial Value Network & 3 FC layers, 64 neurons each \\
\hline
\multicolumn{2}{c}{Victim Agent Hyperparameters} \\
\hline
Clipping parameter & 0.2 \\
Discount factor & 0.998 \\
GAE parameter & 0.95 \\
Value loss coefficient & 0.5 \\
Initial learning rate & $5\times10^{-4}$ \\
Maximum gradient norm & 0.5 \\
Batch size & 64 \\
adaptive learning rate KL target & 0.01 \\
\hline
\multicolumn{2}{c}{Adversarial Agent Hyperparameters} \\
\hline
Learning rate & $3\times10^{-4}$ \\
Temperature & 1.0 \\
Entropy weight coefficient & $2\times10^{-4}$ \\
Interpolation factor & $0.75$ \\
Scaling factor & $1\times10^{-4}$ \\
\hline
\end{tabular}
\label{hyper}
\end{table}

\textbf{Performance metrics.} We utilize the following three metrics to compare the performance of STAR and state-of-the-art attack methods. 

\begin{itemize}
    \item \textbf{Reward}: The average per-step reward obtained by the victim agent. In reinforcement learning, the reward reflects the feedback from the environment based on the agent's actions. A higher reward indicates that the agent is making decisions that lead to more favorable outcomes as defined by the task objectives.  

    \item \textbf{Forward Velocity}: The mean translational velocity of the robot in its forward direction (m/s). This metric measures how efficiently the robot moves forward. A higher forward velocity often indicates smoother and more effective movement.  

    \item \textbf{Fall Rate (\%)}: The probability of balance failure, calculated as the ratio between \textit{fall} incidents and total control steps. This metric evaluates the stability of the agent. A lower fall rate means the robot is better at maintaining its balance, which is crucial for reliable and safe operation.  
\end{itemize}

\begin{figure}[t]
\centering
\subfloat[]{%
    \includegraphics[width=0.5\columnwidth]{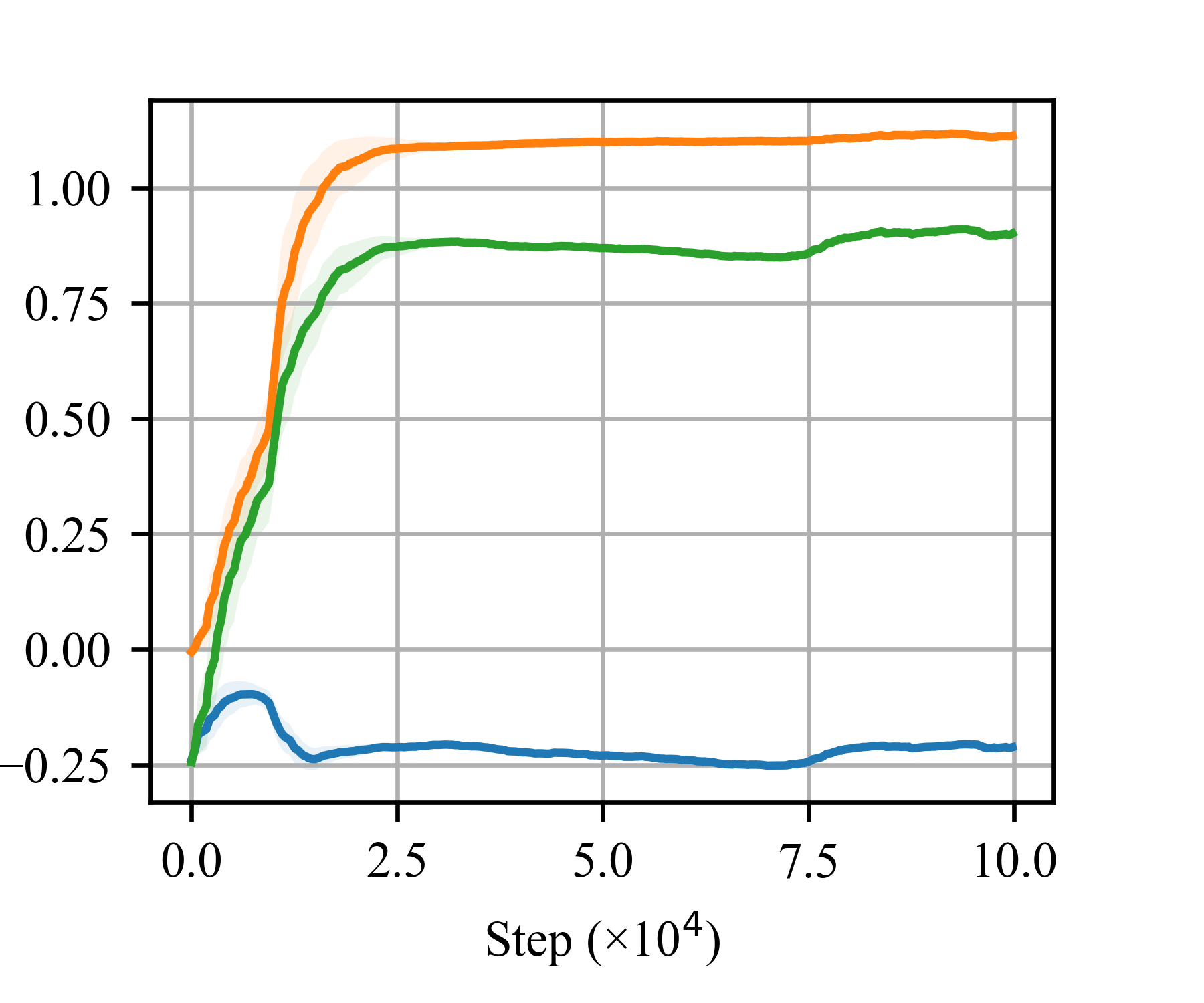}%
    \label{fig:victim_train_aliengo}}
\hfill
\subfloat[]{%
    \includegraphics[width=0.5\columnwidth]{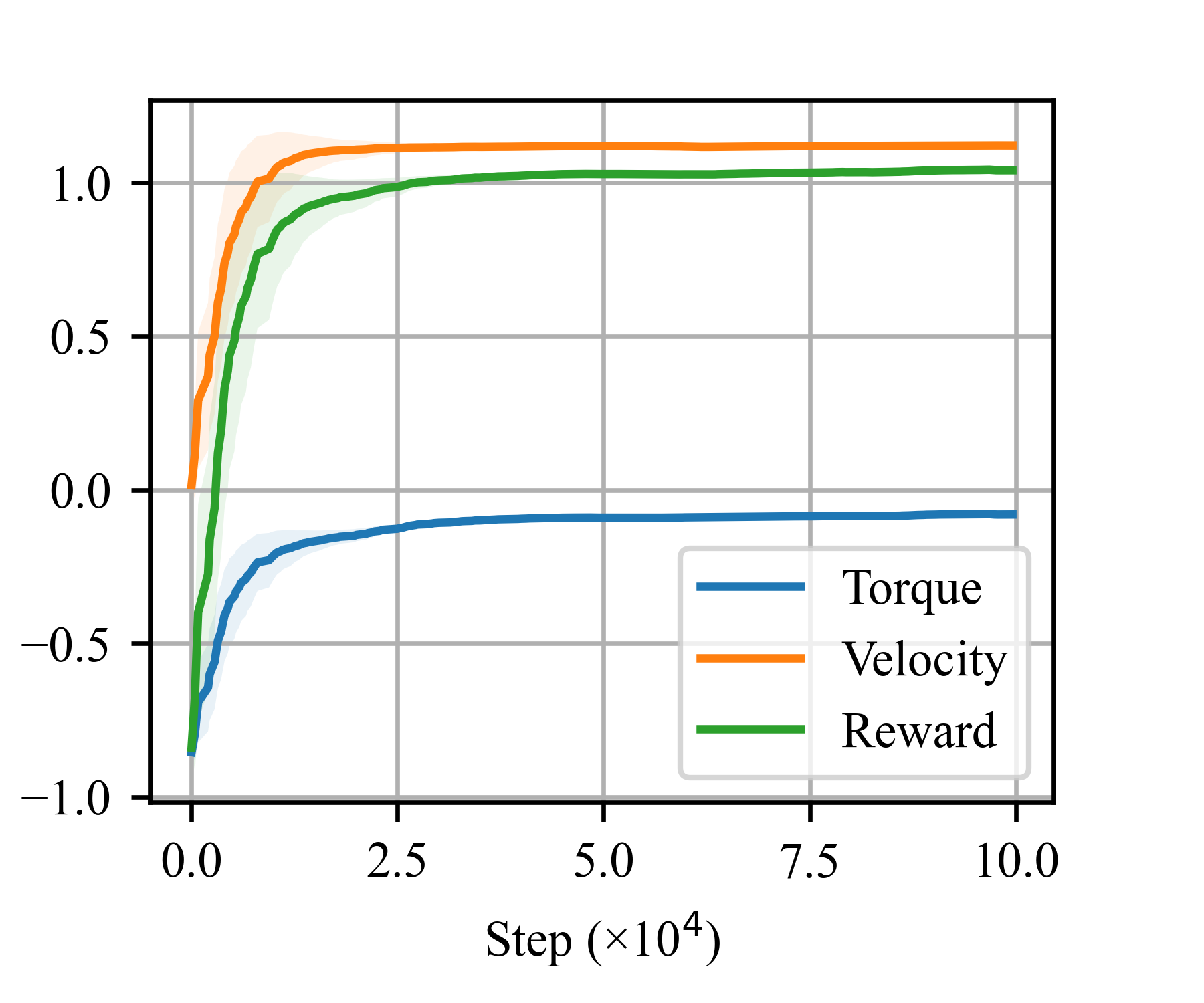}%
    \label{fig:victim_train_anymal}}
\caption{Training trajectories of torque control, velocity control, and reward of the victim agents. (a) Aliengo locomotion. (b) ANYmal locomotion.}
\label{fig:victim_train}
\end{figure}

\textbf{Baselines.}
To evaluate the performance of STAR, we compare it with the following alternatives. 

\begin{itemize}
    \item \textbf{Random Attack}: A simple baseline that applies uniform random noise as perturbations, serving as a naive benchmark for comparison.
    
    \item \textbf{FGSM} \cite{goodfellow2014explaining}: The Fast Gradient Sign Method (FGSM) is a single-step attack that perturbs the input in the direction of the gradient's sign, aiming to maximize the loss with minimal computational overhead.
    
    \item \textbf{DI\textsuperscript{2}-FGSM} \cite{xie2019improving}: An extension of FGSM that incorporates input transformations, such as resizing and padding, before computing gradients. This approach introduces input diversity, which has been shown to improve transferability to black-box models by mitigating overfitting to specific input representations.
    
    \item \textbf{MI-FGSM} \cite{dong2018boosting}: A variant of iterative FGSM that integrates a momentum term to stabilize gradient updates and reduce the likelihood of being trapped in local optima. This method is designed to enhance the transferability of adversarial examples to black-box models.
    
    \item \textbf{NI-FGSM} \cite{lin2019nesterov}: An extension of MI-FGSM that incorporates Nesterov accelerated gradient to refine the update direction by approximating future gradient information.
    
    \item \textbf{R+FGSM} \cite{tramer2017ensemble}: A variation of FGSM that introduces a small random perturbation to the input before applying the gradient-based modification. This is intended to reduce sensitivity to local gradient structures, potentially improving robustness in certain scenarios.
    
    \item \textbf{PGD} \cite{madry2017towards}: Projected Gradient Descent (PGD) is an iterative extension of FGSM that applies multiple gradient ascent steps, each followed by a projection onto the allowed perturbation space. 
    
    \item \textbf{TPGD} \cite{zhang2019theoretically}: A variant of PGD in which the cross-entropy loss is replaced with KL divergence loss for adversarial example generation. This alternative loss design is intended to improve attack performance against models trained with label smoothing or other robustness-promoting techniques.
    
    \item \textbf{EOT-PGD} \cite{liu2018adv}: An extension of PGD that incorporates the Expectation over Transformation (EOT) framework, where random transformations (e.g., rotations, translations) or model variations are applied during each attack iteration. This method aims to generate adversarial examples that maintain effectiveness across a distribution of input variations.

\end{itemize}

For a fair comparison, given a perturbation budget $\epsilon$, we perform iterative attacks using $N={10,20}$ steps with step size $\alpha=\epsilon/N$.

\begin{figure}[t]
\centering
\subfloat[]{%
    \includegraphics[width=0.5\columnwidth]{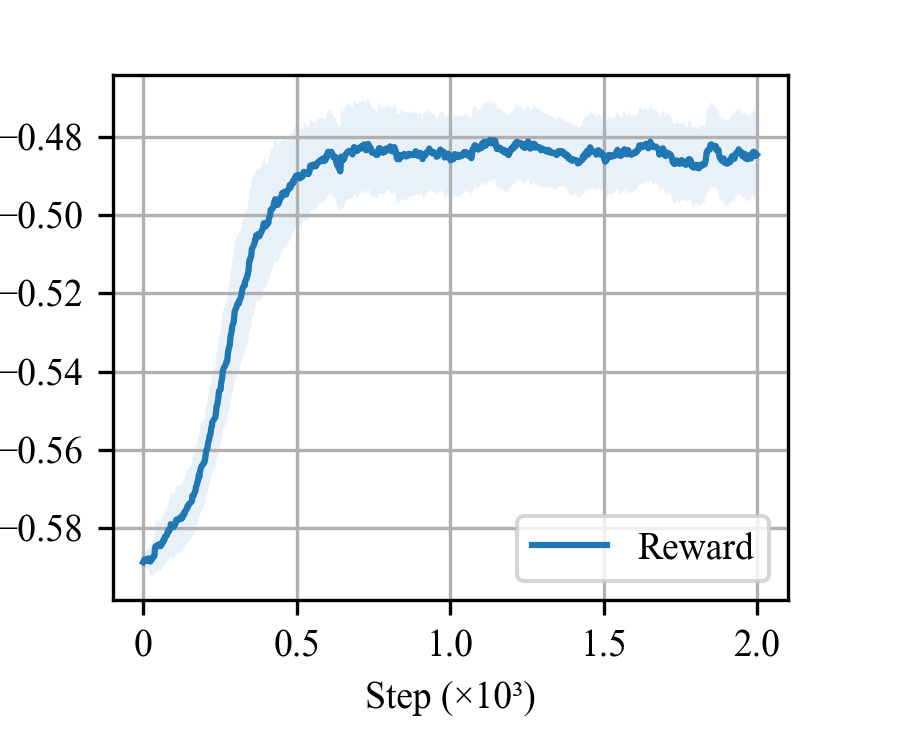}%
    \label{fig:adv_AliengoReward}}
\hfill
\subfloat[]{%
    \includegraphics[width=0.5\columnwidth]{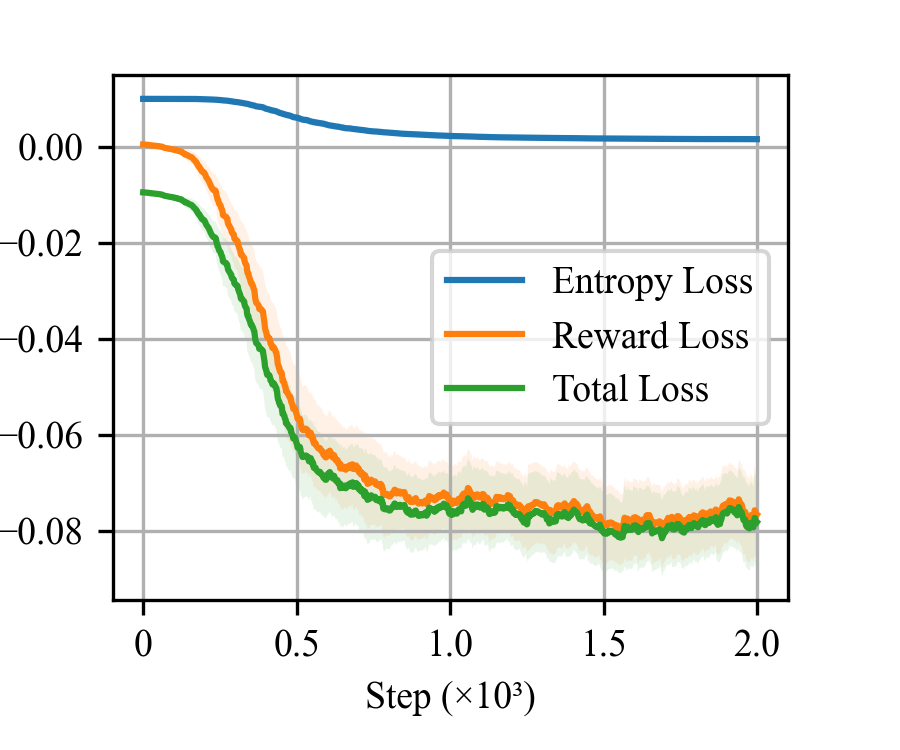}%
    \label{fig:adv_Aliengo_losses}}
\\
\subfloat[]{%
    \includegraphics[width=0.5\columnwidth]{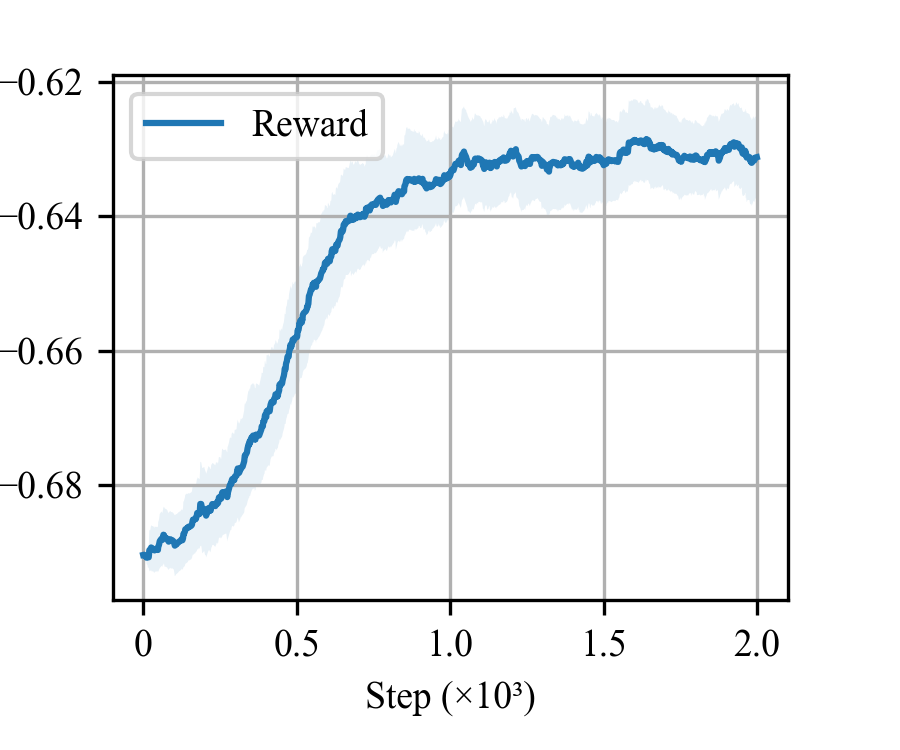}%
    \label{fig:adv_ANYmalReward.png}}
\hfill
\subfloat[]{%
    \includegraphics[width=0.5\columnwidth]{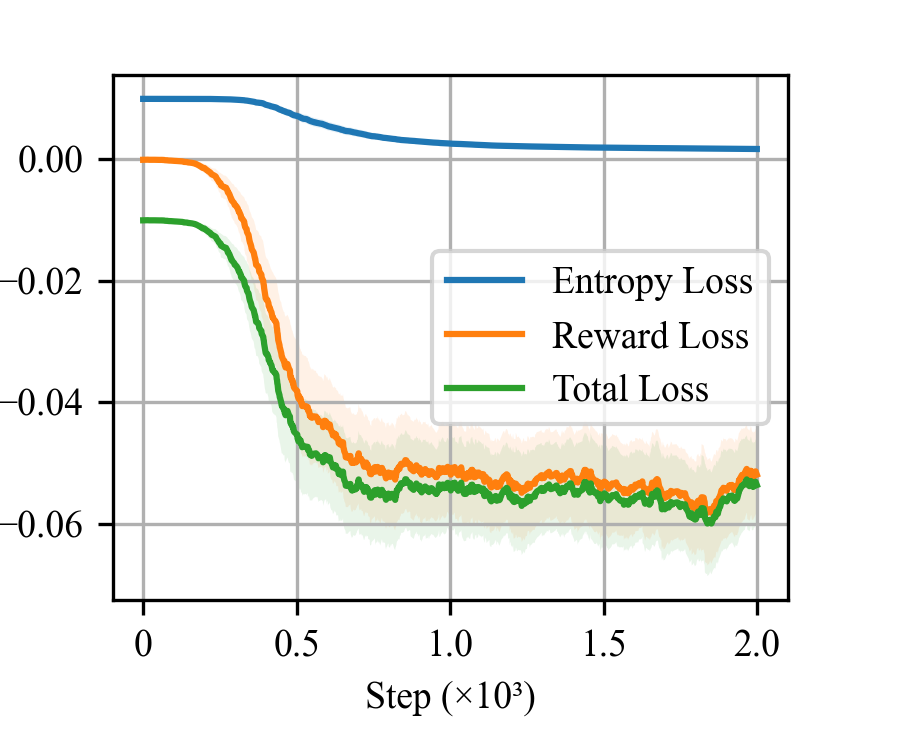}%
    \label{fig:adv_ANYmal_losses.png}}
\caption{Training trajectories of the adversarial agents. (a) Aliengo reward. (b) Aliengo loss. (c) ANYmal reward. (d) ANYmal loss.}
\label{fig:adv_train}
\end{figure}

\begin{table*}[t]

\caption{Aliengo Locomotion results, reported as mean ± SD over 20 episodes, with best in bold and suboptimal underlined. The ↑ indicates higher value is better, while the ↓ indicates lower value is better.}
\centering
\footnotesize
\begin{tabular}{|l|c|c|c|c|c|c|c|}
\hline
\multicolumn{8}{|c|}{\textbf{Aliengo Locomotion, $\epsilon = 0.025$}} \\
\hline
\multirow{2}{*}{Method} & \multirow{2}{*}{step} & \multicolumn{2}{c|}{Reward ↓} & \multicolumn{2}{c|}{Forward Velocity ↓} & \multicolumn{2}{c|}{Fall Rate (\%) ↑} \\
\cline{3-8}
& & Raw & Drop & Raw & Drop & Raw & Rise \\
\hline
No Attack & - & 0.941 ± 0.010& 0.000\%
& 3.777 ± 0.011& 0.000\%
& 0.464 ± 0.018& 0.000\%
\\
\hline
Random & - & 0.927 ± 0.015& 1.488\%
& 3.754 ± 0.029& 0.609\%
& 0.491 ± 0.041& 0.027\%
\\
\hline
FGSM & - & 0.819 ± 0.033& 12.965\%
& 3.596 ± 0.059& 4.792\%
& 0.779 ± 0.094& 0.315\%
\\
\hline
\multirow{2}{*}{\centering DI\textsuperscript{2}-FGSM} & 10 & 0.718 ± 0.050& 23.698\%
& 3.455 ± 0.092& 8.525\%
& 1.119 ± 0.122& 0.655\%
\\
& 20 & \underline{0.674 ± 0.084}& \underline{28.374\%}
& \underline{3.404 ± 0.122}& \underline{9.876\%}
& \underline{1.225 ± 0.255}& \underline{0.761\%}
\\
\hline
\multirow{2}{*}{\centering MI-FGSM} & 10 & 0.787 ± 0.053& 16.366\%
& 3.561 ± 0.088& 5.719\%
& 0.880 ± 0.130& 0.416\%
\\
& 20 & 0.789 ± 0.024& 16.153\%
& 3.557 ± 0.045& 5.825\%
& 0.830 ± 0.065& 0.366\%
\\
\hline
\multirow{2}{*}{\centering NI-FGSM} & 10 & 0.791 ± 0.043& 15.940\%
& 3.567 ± 0.064& 5.560\%
& 0.893 ± 0.150& 0.429\%
\\
& 20 & 0.795 ± 0.042& 15.515\%
& 3.577 ± 0.059& 5.295\%
& 0.846 ± 0.090& 0.382\%
\\
\hline
\multirow{2}{*}{\centering R+FGSM} & 10 & 0.709 ± 0.054& 24.655\%
& 3.430 ± 0.099& 9.187\%
& 1.116 ± 0.166& 0.652\%
\\
& 20 & 0.689 ± 0.053& 26.780\%
& 3.427 ± 0.096& 9.267\%
& \underline{1.175 ± 0.163}& \underline{0.711\%}
\\
\hline
\multirow{2}{*}{\centering PGD} & 10 & 0.734 ± 0.068& 21.998\%
& 3.463 ± 0.124& 8.313\%
& 1.151 ± 0.191& 0.687\%
\\
& 20 & 0.701 ± 0.075& 25.505\%
& 3.425 ± 0.126& 9.320\%
& 1.172 ± 0.166& 0.708\%
\\
\hline
\multirow{2}{*}{\centering TPGD} & 10 & 0.721 ± 0.063& 23.379\%
& 3.411 ± 0.151& 9.690\%
& 1.004 ± 0.158& 0.540\%
\\
& 20 & 0.734 ± 0.044& 21.998\%
& 3.497 ± 0.062& 7.413\%
& 1.062 ± 0.120& 0.598\%
\\
\hline
\multirow{2}{*}{\centering EOTPGD} & 10 & 0.706 ± 0.072& 24.973\%
& 3.423 ± 0.136& 9.373\%
& 1.036 ± 0.186& 0.572\%
\\
& 20 & \underline{0.685 ± 0.081}& \underline{27.205\%}
& \underline{3.407 ± 0.138}& \underline{9.796\%}
& 1.132 ± 0.164& 0.668\%
\\
\hline
STAR (Ours) & - & \textbf{0.622 ± 0.079}&\textbf{ 33.900\%}
& \textbf{3.268 ± 0.166}& \textbf{13.476\%}
& \textbf{1.420 ± 0.252}& \textbf{0.956\%}
\\
\hline
\end{tabular}
\label{aliengo_tab}
\end{table*}

\section{Evaluation}
\subsection{Comparative Experiments}
Tables \ref{aliengo_tab} and \ref{anymal_tab} present the comparative experimental results of robotic locomotion between the two systems, with each configuration evaluated through 20 independent trials. 
For each performance metric, we report both the raw values and the relative changes (reduction/increase) with respect to the non-attacked baseline, expressed as mean ± standard deviation. Note that the relative changes for reward and forward velocity are calculated multiplicatively, while for fall rate, being a ratio metric itself, we employ additive comparison. The optimal method is highlighted in bold, with two competitive alternatives denoted by underlining. 

{\bf{Reward.}} For reward degradation, STAR consistently outperforms all other attack methods across both Aliengo and ANYmal locomotion tasks. Specifically, for Aliengo, STAR leads to a 33.900\% reward drop, which is significantly higher than the second-best method, DI²-FGSM (28.374\%). Similarly, for ANYmal, STAR achieves a 45.377\% reduction, surpassing DI²-FGSM's 38.151\%. This demonstrates the superior adversarial effectiveness of STAR in manipulating the learned policies of robotic locomotion, leading to more severe performance degradation compared to traditional adversarial attack baselines.

\begin{table*}[t]

\caption{ANYmal Locomotion results, reported as mean ± SD over 20 episodes, with best in bold and suboptimal underlined. The ↑ indicates higher value is better, while the ↓ indicates lower value is better.}
\renewcommand{\arraystretch}{1}
\centering
\begin{tabular}{|l|c|c|c|c|c|c|c|}
\hline
\multicolumn{8}{|c|}{\textbf{ANYmal Locomotion, $\epsilon = 0.1$}} \\
\hline
\multirow{2}{*}{Method} & \multirow{2}{*}{step} & \multicolumn{2}{c|}{Reward ↓} & \multicolumn{2}{c|}{Forward Velocity ↓} & \multicolumn{2}{c|}{Fall Rate (\%) ↑} \\
\cline{3-8}
& & Raw & Drop & Raw & Drop & Raw & Rise \\
\hline
No Attack & - & 1.141 ± 0.000& 0.000\%
& 3.973 ± 0.000& 0.000\%
& 0.025 ± 0.000& 0.000\%
\\
\hline
Random & - & 1.128 ± 0.010& 1.382\%
& 3.958 ± 0.021& 0.397\%
& 0.029 ± 0.009& 0.004\%
\\
\hline
FGSM & - & 0.947 ± 0.039& 20.616\%
& 3.607 ± 0.091& 9.690\%
& 0.330 ± 0.067& 0.305\%
\\
\hline
\multirow{2}{*}{\centering DI\textsuperscript{2}-FGSM} & 10 & \underline{0.785 ± 0.044}& \underline{37.832\%}
& \underline{3.232 ± 0.112}& \underline{19.619\%}
& \underline{0.624 ± 0.088}& \underline{0.599\%}
\\
& 20 & \underline{0.782 ± 0.041}& \underline{38.151\%}
& \underline{3.233 ± 0.113}& \underline{19.592\%}
& \underline{0.620 ± 0.094}& \underline{0.595\%}
\\
\hline
\multirow{2}{*}{\centering MI-FGSM} & 10 & 0.875 ± 0.042& 28.268\%
& 3.447 ± 0.105& 13.926\%
& 0.451 ± 0.081& 0.426\%
\\
& 20 & 0.878 ± 0.042& 27.949\%
& 3.461 ± 0.103& 13.556\%
& 0.445 ± 0.082& 0.420\%
\\
\hline
\multirow{2}{*}{\centering NI-FGSM} & 10 & 0.870 ± 0.039& 28.799\%
& 3.444 ± 0.091& 14.006\%
& 0.458 ± 0.077& 0.433\%
\\
& 20 & 0.876 ± 0.027& 28.162\%
& 3.454 ± 0.061& 13.741\%
& 0.450 ± 0.048& 0.425\%
\\
\hline
\multirow{2}{*}{\centering R+FGSM} & 10 & 0.802 ± 0.043& 36.026\%
& 3.282 ± 0.113& 18.295\%
& 0.582 ± 0.087& 0.557\%
\\
& 20 & 0.794 ± 0.055& 36.876\%
& 3.262 ± 0.142& 18.824\%
& 0.594 ± 0.115& 0.569\%
\\
\hline
\multirow{2}{*}{\centering PGD} & 10 & 0.805 ± 0.041& 35.707\%
& 3.284 ± 0.111& 18.242\%
& 0.590 ± 0.091& 0.565\%
\\
& 20 & 0.804 ± 0.037& 35.813\%
& 3.281 ± 0.099& 18.321\%
& 0.595 ± 0.088& 0.570\%
\\
\hline
\multirow{2}{*}{\centering TPGD} & 10 & 0.829 ± 0.035& 33.156\%
& 3.347 ± 0.090& 16.574\%
& 0.530 ± 0.074& 0.505\%
\\
& 20 & 0.826 ± 0.040& 33.475\%
& 3.343 ± 0.105& 16.680\%
& 0.532 ± 0.084& 0.507\%
\\
\hline
\multirow{2}{*}{\centering EOTPGD} & 10 & 0.800 ± 0.041& 36.238\%
& 3.277 ± 0.108& 18.427\%
& 0.581 ± 0.087& 0.556\%
\\
& 20 & 0.791 ± 0.041& 37.194\%
& 3.257 ± 0.110& 18.957\%
& 0.589 ± 0.082& 0.564\%
\\

\hline
STAR (Ours) & - & \textbf{0.714 ± 0.061}& \textbf{45.377\%}
& \textbf{3.053 ± 0.168}& \textbf{24.358\%}
& \textbf{0.746 ± 0.129}& \textbf{0.721\%}
\\
\hline
\end{tabular}
\label{anymal_tab}
\end{table*}

{\bf{Forward Velocity.}} For forward velocity reduction, STAR demonstrates predominant efficacy, inducing the largest drop in velocity among all methods. For Aliengo, STAR reduces the velocity by 13.476\%, outperforming EOTPGD (9.796\%) and DI²-FGSM (9.876\%). Similarly, in ANYmal locomotion, STAR achieves a 24.358\% velocity drop, significantly exceeding the second-best attack, DI²-FGSM (19.619\%). Since forward velocity is a crucial indicator of the robot’s ability to maintain stable locomotion, STAR's superior performance in reducing this metric highlights its effectiveness in destabilizing robotic movement.  

{\bf{Fall Rate.}} STAR also exhibits the highest fall rate increase, making it the most effective attack in forcing robotic failures. For Aliengo, STAR increases the fall rate to 0.956\%, surpassing DI²-FGSM (0.761\%) and R+FGSM (0.711\%). In the ANYmal setup, STAR further demonstrates its effectiveness by increasing the fall rate to 0.721\%, exceeding DI²-FGSM (0.599\%) and R+FGSM (0.569\%). Since the fall rate is a direct measure of catastrophic failure in robotic locomotion, these results confirm STAR's superiority in disrupting stable movement, making it the most potent adversarial attack on both robotic platforms.

\subsection{Behavior Analysis}
\begin{figure*}
    \centering
    \includegraphics[width=1\linewidth]{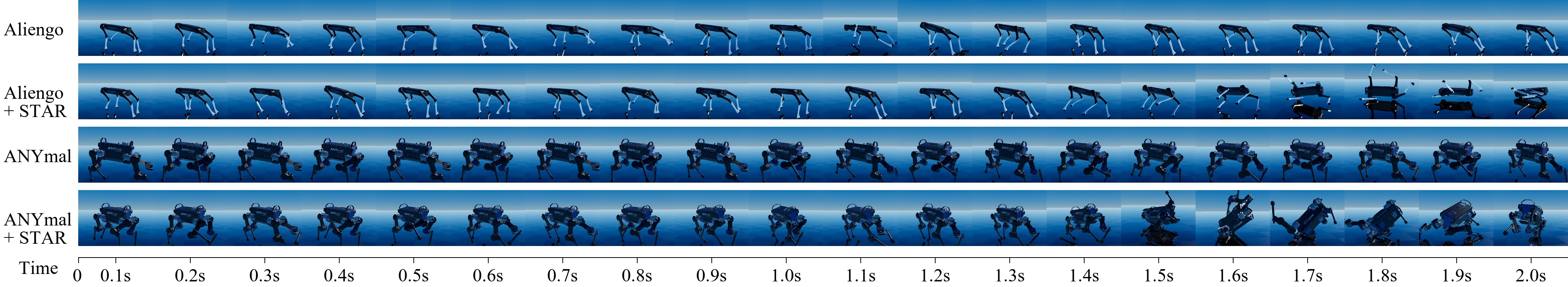}
    \caption{Comparative motion sequences of quadrupedal robots (Aliengo and ANYmal) under normal and adversarial conditions over a 2-second duration, sampled at 0.1 second intervals.}
    \label{fig:movie}
\end{figure*}
\begin{figure*}
    \centering
    \includegraphics[width=1\linewidth]{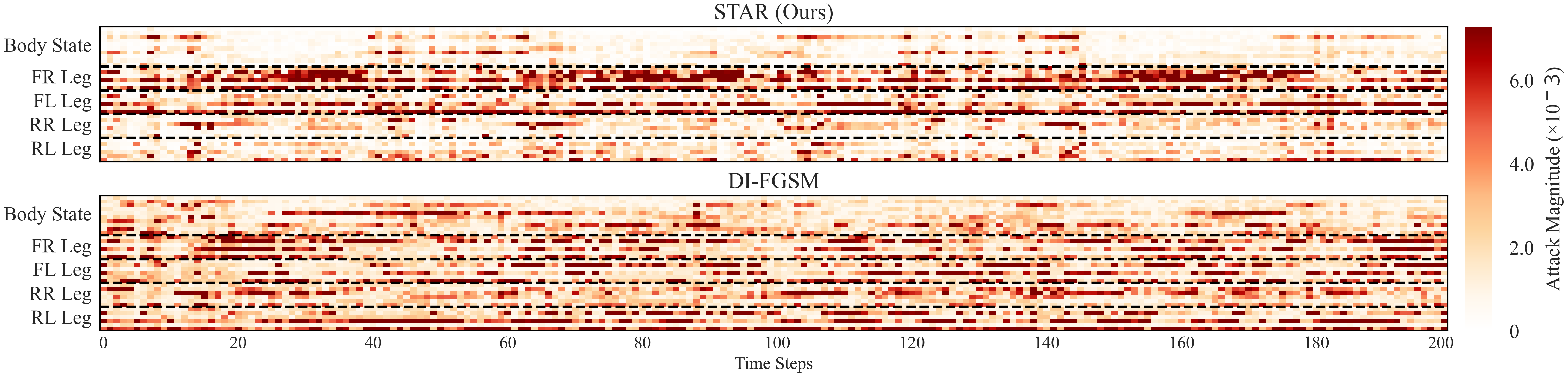}
    \caption{Temporal visualization of perturbation magnitude across body states and leg components (FR: Front Right, FL: Front Left, RR: Rear Right, RL: Rear Left) comparing STAR (Ours) and DI²-FGSM methods over 200 time steps. Darker color indicates higher perturbation magnitude.}
    \label{fig:heatmap}
\end{figure*}
Figure \ref{fig:movie} compares the locomotion of two quadrupedal robots under normal conditions and STAR-induced adversarial perturbations, with motion sequences sampled at 0.1-second intervals over 2 seconds to capture gait stability and disruption. Aliengo (first row) maintains a stable gait without interference, while STAR perturbations (second row) cause deviations at around 1.5 seconds, leading to instability and balance loss by the end of the sequence. Similarly, ANYmal (third row) moves smoothly under normal conditions, whereas STAR (fourth row) triggers earlier destabilization at around 1.2 seconds and collapses by 1.5 seconds. This visual evidence intuitively demonstrates the effectiveness of STAR in inducing instability and disrupting gait coordination.

Figure \ref{fig:heatmap} visualizes the perturbation distribution across observation dimensions using temporal heatmaps. We compare STAR with the optimal baseline DI\textsuperscript{2}-FGSM. The observation space is divided into five groups, separated by dashed lines in the figure: (i) Body State, including body height, orientation, linear velocities  and angular velocities; and four leg groups corresponding to (ii) Front-Right (FR), (iii) Front-Left (FL), (iv) Rear-Right (RR), and (v) Rear-Left (RL) legs, where each leg contains hip, thigh, and calf joint angles and velocities. It can be observed that STAR shows clear perturbation patterns, mainly targeting body orientation and front leg states. The focus on front leg configurations and body states is significant since they directly control stability and balance during forward movement. Notably, these perturbation patterns demonstrate temporal periodicity, which may correlate with the cyclic nature of individual steps in quadrupedal locomotion. This shows STAR can identify both key state variables and important time windows during locomotion. In contrast, DI\textsuperscript{2}-FGSM applies a more uniform perturbation distribution without selectively targeting key state components. Its perturbation budget is spread across less impactful dimensions.

\begin{figure}[t]
\centering
\subfloat[]{%
    \includegraphics[width=0.5\columnwidth]{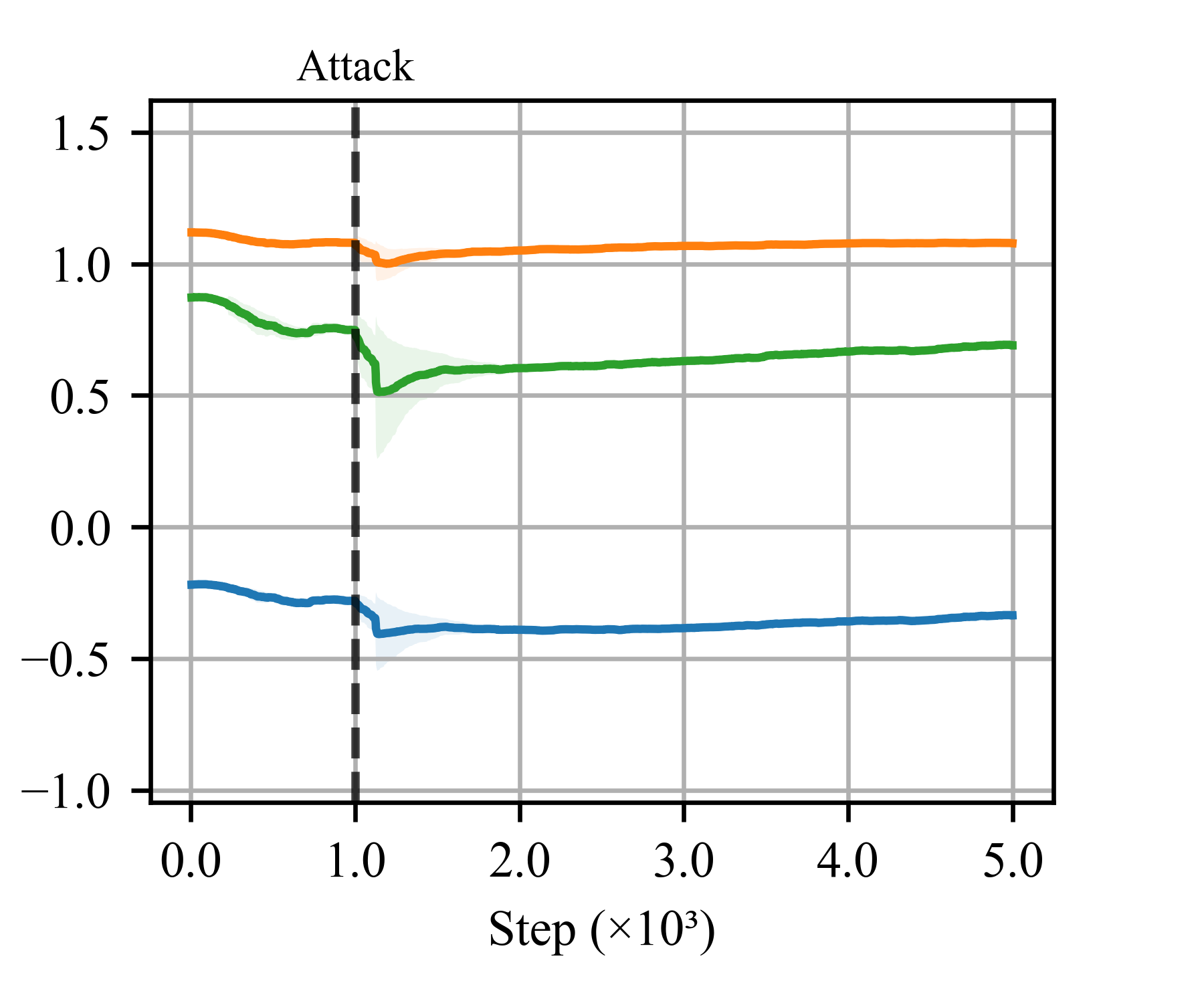}%
    \label{fig:def_train_aliengo}}
\hfil
\subfloat[]{%
    \includegraphics[width=0.5\columnwidth]{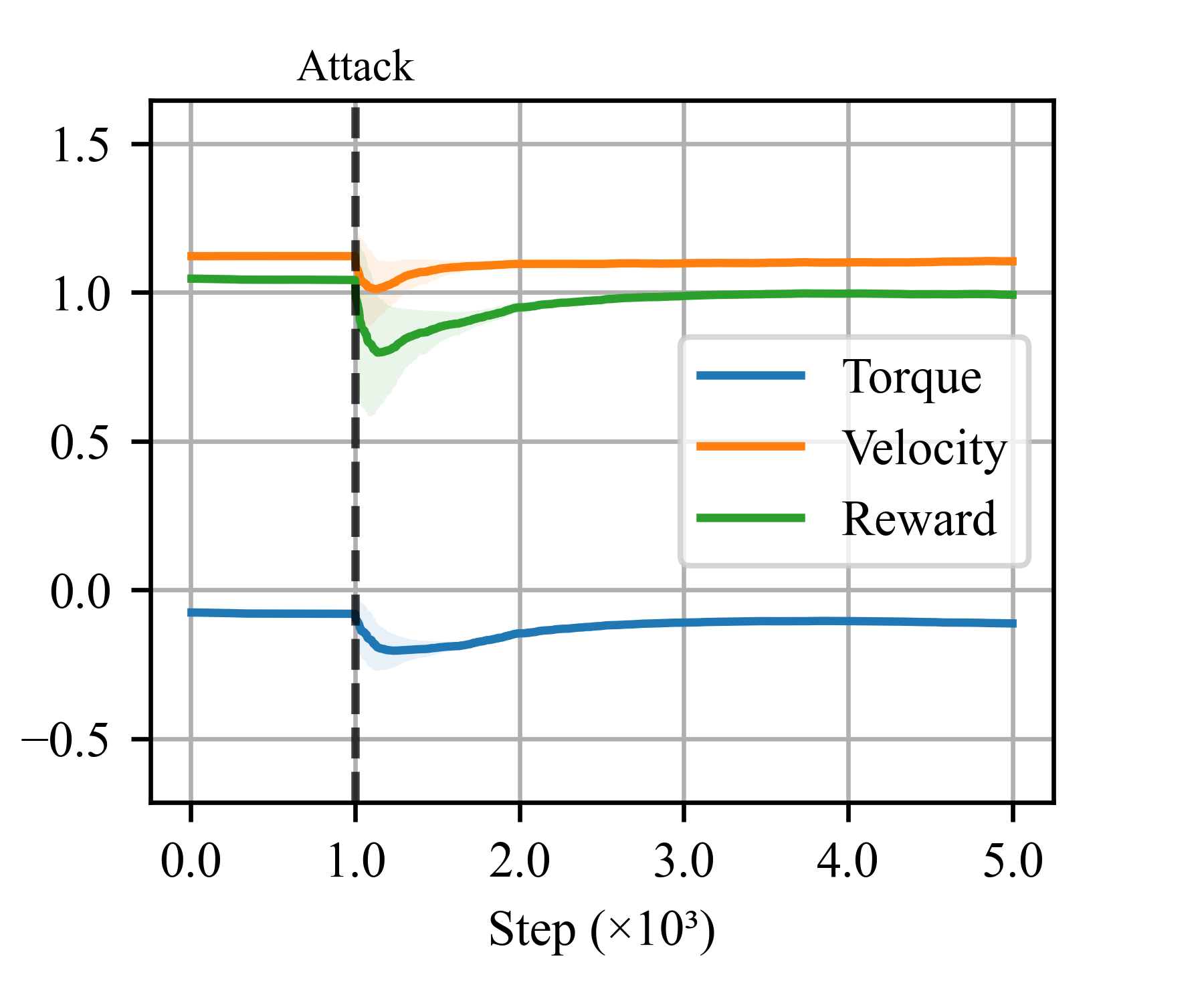}%
    \label{fig:def_train_anymal}}
\caption{Training trajectory during adversarial defense. The performance initially drops due to adversarial attacks but gradually recovers as the victim agent develops robustness. (a) Aliengo locomotion. (b) ANYmal locomotion.}
\label{fig:def_train}
\end{figure}

\begin{figure}[t]
\centering
\includegraphics[width=1\columnwidth]{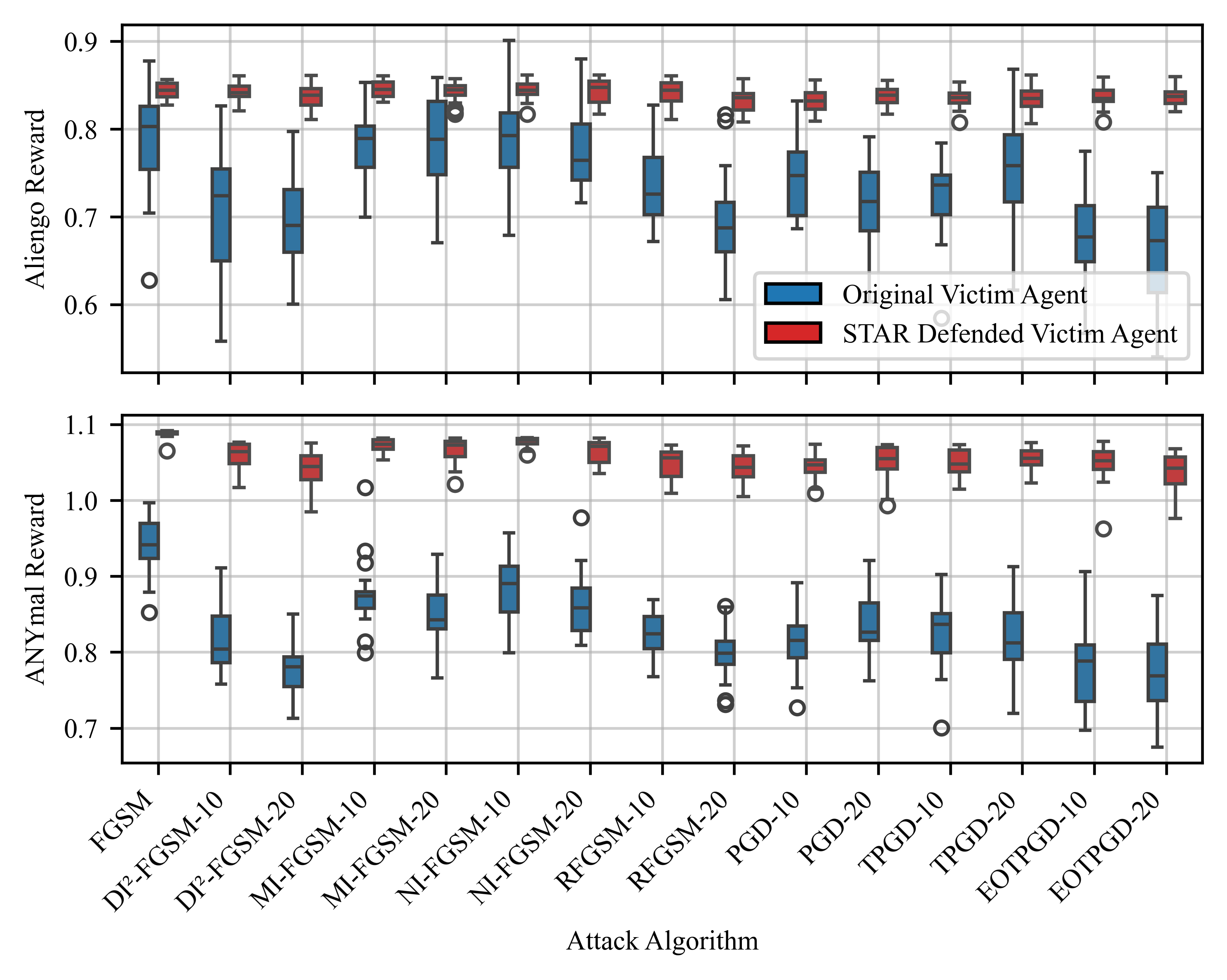}
\caption{Performance comparison between original and STAR defended victim agents under different adversarial attacks. For iterative attack methods, the notation algorithm-N denotes $N$ iteration steps}
\label{fig:defense_comparison}
\end{figure}

\subsection{Post-Defense Robustness}
Figure \ref{fig:def_train} shows the victim agent's training trajectory under STAR adversarial defense, highlighting STAR's potential for enhancing model robustness. The training runs for $4\times10^3$ steps at a learning rate of $3\times10^{-4}$, with the victim agent operating under STAR attacks. The adversarial perturbations are constrained within $\epsilon = 0.025$ for ANYmal and $\varepsilon = 0.1$ for Aliengo. When the STAR attack is introduced at $1\times10^3$ steps, the reward initially drops but recovers as the victim agent adapts to the adversarial attack through policy optimization.

Figure \ref{fig:defense_comparison} presents box plots comparing the robustness of both the original and defended victim agents against various attack methods, with each box representing the distribution of rewards across 20 evaluation episodes. For Aliengo, the defended victim agent demonstrates consistently strong performance with median rewards above 0.83 across all attack methods, exhibiting good stability with tight interquartile ranges. In contrast, the original victim agent shows its vulnerability, with median rewards varying between 0.65-0.8 and large variances, particularly under DI²-FGSM and EOTPGD. Similar enhancement in robustness is observed for ANYmal, where the defended victimagent has median rewards consistently above 1.0 across different attacks. The original victim agent, except for the single-step FGSM attack, suffers a significant reward drop (below 0.95). These results demonstrate that STAR effectively generates adversarial samples for robust defense development. Through adversarial training with STAR-crafted perturbations, the defended victim agent develops strong resistance against diverse attack scenarios, achieves near-nominal performance even under strong iterative attacks (e.g., DI²-FGSM-20).

\begin{figure}
    \centering
    \includegraphics[width=1\linewidth]{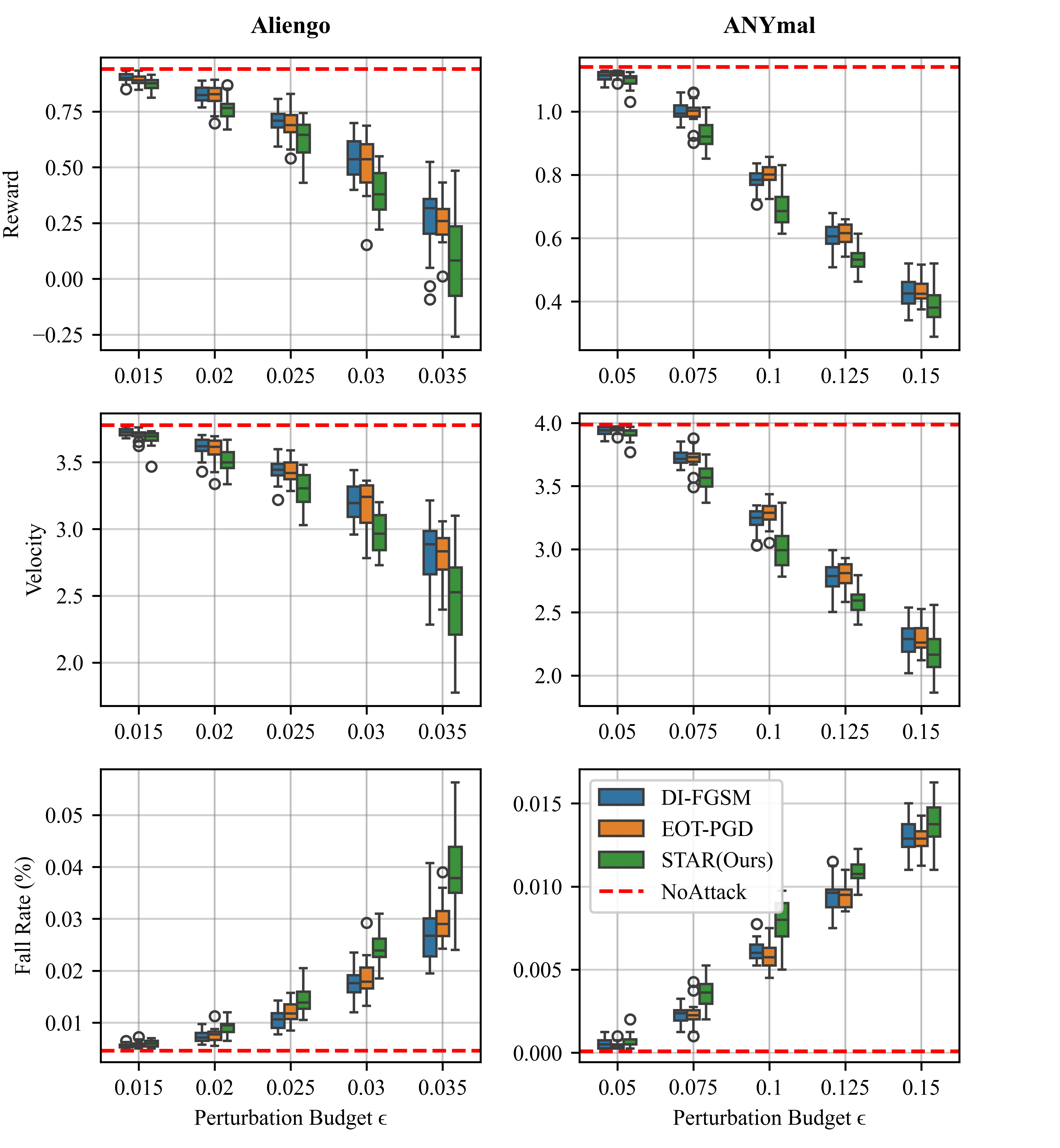}
    \caption{Performance comparison of different attack methods under varying perturbation budgets $\epsilon$.}
    \label{fig:ablation}
\end{figure}

\subsection{Ablation Study}
Figure~\ref{fig:ablation} shows the impact of perturbation budget $\epsilon$ on the attack performance. The perturbation budget $\epsilon$ serves as a crucial hyperparameter that controls the magnitude of adversarial perturbations. A larger $\epsilon$ allows for more substantial modifications to the input observations, potentially leading to more effective attacks, while a smaller $\epsilon$ ensures better imperceptibility. We compare STAR against two suboptimal baselines DI²-FGSM and EOTPGD under different $\epsilon$ values across both tasks. As the perturbation budget $\epsilon$ increases, all attack methods show enhanced effectiveness, indicated by decreasing rewards and velocities, along with rising failure rates. Our proposed STAR method consistently outperforms baseline approaches under various perturbation constraints, particularly in higher $\epsilon$ regions, demonstrating advantages over other baseline methods in terms of reward reduction, velocity decrease, and increased fall rates. This is particularly pronounced in the Aliengo locomotion task under higher perturbation budgets (when $\epsilon=0.035$). With increased perturbation allowance, our mask-based mechanism more effectively targets identified vulnerable states (e.g., critical leg configurations), resulting in improved attack performance.

\section{Conclusion}
In this paper, we address the challenge of state-aware adversarial attacks in DRL for robotic control systems. Through the proposed AVD-MDP framework, we provide theoretical foundations for analyzing adversarial attacks while considering temporal dynamics and long-term rewards. The STAR algorithm implements these insights by combining soft mask-based state targeting with information-theoretic optimization, enabling efficient generation of state-selective perturbations. Experimental results on robotic locomotion tasks demonstrate that our approach achieves improved attack effectiveness under fixed perturbation budgets compared to conventional gradient-based methods, while the resulting adversarial training enhances the robustness of DRL policies in robotic control applications.  As a potential future direction, we are looking forward to extending our method to improve the performance of various applications such as large language models~\cite{hu2024agentscomerge,lin2024splitlora,hu2021lora,fang2024automated,zhou2025survey,wang2025contemporary}, multi-modal training~\cite{fang2024ic3m,tang2024merit}, and distributed machine learning~\cite{lin2024hierarchical,lyu2023optimal,zhang2024fedac,lin2024efficient,hu2024accelerating,lin2025leo,zhang2024satfed,lin2024fedsn}.

\bibliographystyle{IEEEtran}
\bibliography{reference}

\vfill

\end{document}